\documentclass{article}

\usepackage{PRIMEarxiv}

\usepackage[utf8]{inputenc} 
\usepackage[T1]{fontenc}    
\usepackage[numbers]{natbib} 
\usepackage[hidelinks]{hyperref}       
\usepackage{url}            
\usepackage{booktabs}       
\usepackage{amsfonts}       
\usepackage{nicefrac}       
\usepackage{microtype}      
\usepackage{xcolor}
\usepackage{fancyvrb}
\usepackage{fvextra}
\usepackage{tcolorbox}
\tcbuselibrary{breakable,skins}
\newcommand{\toolong}[1]{}
\DefineVerbatimEnvironment{PromptVerbatim}{Verbatim}{fontsize=\small}
\newtcolorbox{promptbox}[1][]{
  colback=gray!12,
  colframe=gray!45,
  boxrule=0.4pt,
  arc=2pt,
  left=5pt,
  right=5pt,
  top=4pt,
  bottom=4pt,
  #1,
}
\usepackage{algorithm}
\usepackage{algorithmic}
\usepackage{amsmath,amssymb,amsfonts,mathtools,bm, amsthm}
\usepackage{thmtools}
\usepackage{wrapfig}
\newtheorem{theorem}{Theorem}
\newtheorem{lemma}{Lemma}
\newtheorem{claim}{Claim}

\usepackage[nameinlink,capitalise,noabbrev]{cleveref}
\Crefname{assumption}{Assumption}{Assumptions}
\crefname{assumption}{assumption}{assumptions}
\usepackage{pgfplots}
\usepackage{subcaption}
\usepackage{multirow}
\pgfplotsset{compat=1.18}
\usepgfplotslibrary{fillbetween}

\newcommand{\vect}[1]{\ensuremath{\bm{#1}}}
\newcommand{\E}{\ensuremath{\mathbb{E}}}

\newcommand{\parenthese}[1]{\left(#1\right)}
\newcommand{\bracket}[1]{\left[#1\right]}

\newcommand{\curlybracket}[1]{\left\{#1\right\}}
\newcommand{\xcal}{\ensuremath{\mathcal{X}}}

\DeclareMathOperator*{\argmax}{arg\,max}

\usepackage{ifthen}
\newboolean{showcomments}
\setboolean{showcomments}{true}

\newboolean{showfinished}
\setboolean{showfinished}{true}

\ifthenelse{\boolean{showcomments}}
{ \newcommand{\mynote}[3]{
		\fbox{\bfseries\sffamily\scriptsize#1}
		{\small$\blacktriangleright$\textsf{\emph{\color{#3}{#2}}}$\blacktriangleleft$}}
	\newcommand{\zzz}[1]{{\setlength{\fboxsep}{2pt}\fcolorbox{black}{yellow}{\textsf{\emph{#1}}}}\xspace}}
{ \newcommand{\mynote}[3]{}
	\newcommand{\zzz}[1]{}}

\title{Bandits in Prod: Hyperparameter Optimization at Inference Time
}

\author{
 Louis Abraham \\
  Tiime\\
  Paris, France \\
  \texttt{louis.abraham@tiime.fr} \\
   \And
 Tuan-Anh Nguyen \\
  Tiime\\
  Paris, France \\
  \texttt{tuan-anh.nguyen@tiime.fr} \\
  \And
 Nicolas Devatine \\
  Tiime\\
  Paris, France \\
  \texttt{nicolas.devatine@tiime.fr} \\
}

\begin{document}
\maketitle

\begin{abstract}
  Many production systems can assess a configuration only by using it on live requests and observing noisy feedback. Modern agentic systems are a prominent example, with inference-time choices such as model selection, retrieval depth, prompting strategy, and decoding temperature, yet often with no representative validation data. We formalize this setting as Online Hyperparameter Optimization (OHPO) and cast it as an infinitely many-armed bandit over mixed and conditional search spaces. We introduce IMABO, a general framework that combines any bandit policy for choosing among already sampled configurations with any oracle for proposing new ones. We instantiate it with IMOSS, a restart-free anytime policy whose active set grows as $t^{\beta}$, and prove an expected cumulative quantile-regret bound of $O(p_\rho^{-1/\beta} + T^{(1+\beta)/2})$, where $\beta\in(0,1)$ controls active-set growth and $p_\rho$ lower-bounds the probability that a proposed configuration falls in the top-$\rho$ fraction of the search space. 
  We combine IMOSS with three practical oracles: a Tree-structured Parzen Estimator, an incumbent-mutation oracle driven by a per-coordinate bandit, and a pretrained tabular foundation model, all three improving over the uniform random oracle baseline.
  IMABO outperforms all baselines in terms of regret across diverse OHPO settings, from tuning classical machine-learning models to configuring LLM-based agents. Our implementation is available at \url{https://github.com/Tiime-Software/IMABO}.
\end{abstract}
\section{Introduction}

Hyperparameter optimization (HPO) is usually performed offline by evaluating configurations on a fixed dataset and keeping the best one, with methods such as grid search, random search, or Bayesian optimization~\cite{bergstra2012random,snoek2012practical,li2017hyperband}. A growing class of systems cannot be tuned this way. Applications built around large language models served by external providers expose only inference-time settings, such as the choice of model, the decoding temperature, or the prompting strategy. Without a dataset that reflects real usage, a configuration can be evaluated only by serving it and observing how it performs. We call this problem online hyperparameter optimization (OHPO). Because every evaluation is a served request, exploration is never free: what matters is the reward accumulated over the whole request stream, not the quality of a final recommendation, and the horizon of that stream is not known in advance. OHPO has received little attention, falling between (offline) HPO, which evaluates configurations before deployment, and online learning, which rarely exploits the structure of a hyperparameter space.

This problem has the structure of a multi-armed bandit (MAB), where each configuration is an arm, and each served request is a pull of the chosen arm. But a hyperparameter space is typically continuous or mixed, combining real-valued, integer, and categorical settings, so the possible configurations far outnumber the requests the system will ever serve. Bandit problems in which the arms outnumber the available pulls are known as infinitely many-armed bandits (IMAB). Existing approaches either assume a metric under which nearby configurations earn similar rewards, an assumption that is meaningless for categorical parameters such as the choice of model, or draw new configurations at random, learning nothing from past rewards.

We address this problem with IMABO (Infinitely Many-Armed Bandit with Oracle), a framework that pairs a bandit policy over a growing active set of configurations with an oracle that proposes which configuration to admit next. Our contributions are as follows.
\begin{itemize}
    \item We introduce and formalize the OHPO problem, cast it as an infinitely many-armed bandit, and propose IMABO, a framework that combines any bandit policy with any oracle, requiring only that a configuration can be scored from live feedback and making no further assumptions about the search space.
    \item We introduce IMOSS, an anytime bandit policy for online hyperparameter optimization, prove a sublinear cumulative quantile-regret bound, and instantiate it with several arm-creation oracles: IMOSS-Random, IMOSS-TPE, IMOSS-TabPFN, and IMOSS-mutate-KL$\times$PE. These variants natively support continuous, integer, and categorical hyperparameters while remaining computationally efficient for production. Our implementation is publicly available.\footnote{Code and experiments: \url{https://github.com/Tiime-Software/IMABO}}
    
    \item We validate our method on both continuous and discrete HPOBench tasks, and on a real LLM-based question-answering system over a mixed search space of retrieval, prompting, and model choices.
\end{itemize}

\section{Related Work}
\label{sec:related}

Our work lies at the intersection of HPO and MAB theory, and relates most closely to recent methods that bring bandit ideas into production.

\paragraph{Hyperparameter optimization.} 

Most HPO methods run offline, optimizing a fixed objective over a budget. Grid and random search remain strong baselines on problems of low effective dimension~\cite{bergstra2012random}. Model-based methods fit a surrogate to choose where to sample, through Gaussian-process Bayesian optimization~\cite{snoek2012practical,shahriari2016taking}, random forests for conditional and categorical spaces~\cite{hutter2011smac}, or the Tree-structured Parzen Estimator (TPE)~\cite{bergstra2011algorithms}, which needs no metric and handles both continuous and categorical parameters, and whose density ratio has since been recast as a classification problem~\cite{tiao2021bore,song2022lfbo}. Mixed continuous-categorical spaces have also been addressed by Gaussian-process optimization restricted to trust regions~\cite{wan2021casmopolitan}, which still assumes a kernel over the space. A~recent class of surrogates avoids this per-task fitting by using a tabular foundation model, a transformer pretrained once on synthetic data that predicts how a configuration will perform in a single forward pass, with no retraining during the search. Such surrogates have been used both to forecast the final accuracy of a partially trained model~\cite{rakotoarison2024incontext} and to predict the objective directly in high-dimensional problems~\cite{yu2025gitbo}.

\paragraph{Multi-, continuum-, and infinite-armed bandits.}
In the stochastic $K$-armed bandit setting, index policies such as UCB1~\cite{auer2002finite} and the minimax-optimal MOSS~\cite{audibert2009minimax} pull the arm with the highest upper confidence bound.
The anytime MOSS index~\cite{degenne2016anytime} keeps the minimax guarantee without knowing the horizon $T$ in advance. When the number of arms far exceeds the horizon, sublinear regret against the global optimum requires an additional assumption~\cite{berry1997bandit}. One class of methods assumes that nearby arms are similar, either via an explicit metric (continuum-armed~\cite{kleinberg2004nearly}) or via metric-free optimistic partitioning~\cite{bubeck2011xarmed,munos2011optimistic,valko2013stochastic,bartlett2019simple}, several of which we use as baselines. Such a metric is natural for a continuous parameter but not for a categorical choice. A second class draws each arm from a fixed distribution over configurations (a reservoir) and measures regret relative to it~\cite{carpentier2015simple}. \citet{chaudhuri2018quantile} take the target to be a quantile of the mean reward induced by the reservoir, their QRM1 and QRM2 running MOSS on a growing pool but staying anytime by restarting on a doubling schedule, while UCB-AIR~\cite{wang2008algorithms} grows its arm set without restarts. No existing method is at once restart-free, anytime, and able to bound cumulative quantile regret over a growing arm set. A~parallel line, contextual bandits, replaces per-arm means by a reward model shared across arms, linear~\cite{li2010contextual}, neural~\cite{zhou2020neuralucb}, tree-based~\cite{nilsson2024tree}, or, concurrently with this work, a prior-data fitted network~\cite{tan2026pfnts}; all score a fixed enumerated action set and bound regret against its best arm, whereas our oracle uses such a surrogate to \emph{propose} arms not yet in the active set, a step \citet{tan2026pfnts} leave open for large action spaces.

\paragraph{Bandits \& HPO.} Bandit ideas first entered HPO in the offline training setting, where Successive Halving~\cite{jamieson2016nonstochastic}, Hyperband~\cite{li2017hyperband}, and BOHB~\cite{falkner2018bohb} allocate a limited training budget across configurations by casting the choice as best-arm identification, and where CoCaBO~\cite{ru2020cocabo} runs a bandit over the categorical coordinates alongside Gaussian-process optimization over the continuous ones. \citet{shang2019dttts} extend this best-arm view to a fixed reservoir of configurations with D-TTTS, but evaluate on offline validation data and target simple regret. Bandits have long been proposed as a successor to A/B testing~\cite{scott2010modern,chapelle2011empirical}. A recent line of work tunes hyperparameters online, bounding dynamic regret over continuous coordinates under a Lipschitz assumption~\cite{lu2022nonstationary,kang2023online} or running \textsc{Exp3} over mixed actions that must be enumerated in advance and scored on a validation split~\cite{liu2023online}. Population-based training~\cite{jaderberg2017population} and its bandit variants~\cite{parkerholder2021pb2mix,wan2022bgpbt} instead tune \emph{during} a training run by inheriting weights across parallel workers, which a single serving stream against a fixed provider-side model cannot supply. A separate line couples bandit selection with mutation of the best solution found so far: bandit-based random mutation hill-climbing~\cite{liu2016banditrmhc} mutates it one component at a time, using a set of per-coordinate bandits to decide which component to change, and Mutant-UCB~\cite{bregere2024mutantucb} injects evolutionary mutation operators into a UCB best-arm-identification bandit for model selection.

\paragraph{Inference-time tuning of LLM systems.} LLM systems have made our setting widespread, and the closest prior work tunes them online with bandits. Only two of these methods take a \emph{configuration} as an arm. 
AutoRAG-HP~\cite{fu2024autorag} tunes a RAG pipeline from live rewards alone with a hierarchy of per-hyperparameter bandits, so cost scales with the sum rather than the product of the axis sizes, but every axis must be discretized in advance, and no regret guarantee is given. Each trial changes one hyperparameter and leaves the others so the search is a random walk on a pre-declared grid that credits each observed reward to the single coordinate it just moved; we include it in our benchmark.
BanditSpec~\cite{hou2025banditspec} does bound a stopping-time regret over speculative-decoding configurations, but its rounds are token steps inside a single generation and its objective is decoding latency rather than answer quality. In the rest of this literature, as in routing benchmarks~\cite{hu2024routerbench}, an arm is a bare model identity drawn from an enumerated pool, with no structure to interpolate and nothing to propose outside it. Within this group, \citet{xia2024tiucb} commit to a model before observing one reward, with a logarithmic bound, but over a handful of candidates and under an increasing-then-converging reward trend. The others either relax the protocol, by cascading several models per request~\cite{chen2023frugalgpt,aggarwal2024automix} or choosing after seeing the query~\cite{ong2024routellm}, or depend on offline artifacts we do without, such as preference data, per-model quality and cost predictors, or acceptance thresholds. Across all of these, the candidate set is declared in advance, and most optimize cost or latency rather than reward quality.

\section{Online Hyperparameter Optimization}
\label{sec:problem}

Let $\xcal$ be a hyperparameter search space mixing continuous, integer, and categorical coordinates (a temperature, a retrieval depth, a choice of model). It may be tree-structured, with some parameters becoming active only once others are fixed, as when a chosen model determines which hyperparameters apply~\cite{bergstra2011algorithms}. The horizon is unknown. At each round $t = 1, 2, \dots$, a learner commits a configuration $\vect{x}_t \in \xcal$ to serve the next request, then observes a reward $r_t \in [0,1]$ (a task success, a click, or a graded answer) drawn independently from an unknown distribution $\mathcal{P}_{\vect{x}_t}$ with mean $\mu_{\vect{x}_t}$. This reward is the only feedback: there is no validation set to query on the side, and past rounds cannot be replayed under a different configuration. 

Competing with the global optimum $\mu^\star = \sup_{\vect{x}\in\xcal}\mu_{\vect{x}}$ is not possible in this setting, as $\xcal$ contains far more configurations than the system will ever try. We therefore measure performance against a quantile of the mean reward, defined relative to a baseline sampling distribution $P_0$ over $\xcal$. Such a baseline is easy to construct even on a tree-structured space, by drawing each decision uniformly, or log-uniformly when a parameter is specified on a logarithmic scale. For $\rho \in (0,1)$, let $\mu_\rho$ be the top-$\rho$ quantile of the mean reward $\mu_{\vect{x}}$ when $\vect{x}$ is drawn from $P_0$, and let $\mathrm{TOP}_\rho = \{\vect{x} \in \xcal \mid \mu_{\vect{x}} \geq \mu_\rho\}$ be the corresponding set of good configurations. A draw from $P_0$ lands in $\mathrm{TOP}_\rho$ with probability at least $\rho$, so about $1/\rho$ draws are enough to find a good configuration. We evaluate a learner by its cumulative $\rho$-regret $\mathcal{R}^{\rho,+}_T = \sum_{t=1}^{T} \parenthese{\mu_\rho - \mu_{\vect{x}_t}}_+$, with $a_+ = \max(a,0)$. It accumulates the reward lost on every request served by a configuration below the target. We also define the simple $\rho$-regret $\mathcal{R}^{\rho,+}_{(T)} = \parenthese{\mu_\rho - \mu_{\vect{x}_{(T)}}}_+$, which measures the quality of the configuration $\vect{x}_{(T)}$ returned after $T$ rounds. Optimizing a running system in this setting is the problem we call online hyperparameter optimization (OHPO).
\section{Infinitely Many-Armed Bandit with Oracle}
\label{sec:imabo}

In our formulation of OHPO, each configuration is an arm and each served request is a pull. A hyperparameter space $\xcal$ is often continuous or mixed, giving infinitely many arms, or at least far more than the system can ever try, which places the problem in the infinitely many-armed bandit (IMAB) setting studied by \citet{berry1997bandit} and \citet{wang2008algorithms}.\footnote{The formulation applies equally to a finite $\xcal$, as long as $\xcal$ is large enough.}

An IMAB algorithm maintains an active set $\mathcal{M}_t \subseteq \xcal$ of the arms seen so far, starting from the empty set and adding one arm at a time. At each round it makes a binary choice. It either pulls an arm already in $\mathcal{M}_t$, selected by whatever bandit rule it runs over the active set, or admits a new arm drawn from $\xcal$, adds it to $\mathcal{M}_t$, and pulls it. The standard formulation draws each new arm from a fixed reservoir distribution over $\xcal$, so the arm admitted at any round is statistically independent of every reward seen before it.

IMABO changes only the expansion step. When a new arm enters $\mathcal{M}_t$, it is supplied not by a fixed reservoir distribution but by an oracle $\mathcal{O}$, a mapping from the history of past pulls and their rewards to a configuration in $\xcal$. Setting $\mathcal{O}$ to draw from the baseline $P_0$ regardless of the history recovers the standard IMAB algorithm with reservoir $P_0$, whereas an oracle that reads the history can steer new arms toward the regions of $\xcal$ that past rewards suggest are promising. We place no further restriction on $\mathcal{O}$, which could be a density estimator over $\xcal$, or even a large language model prompted with past configurations and their rewards. Algorithm~\ref{alg:imabo} gives IMABO for an arbitrary oracle $\mathcal{O}$ and bandit policy $\mathcal{A}$, and a concrete algorithm is obtained by choosing these two components. Selecting which arm of $\mathcal{M}_t$ to serve is itself a standard bandit problem with its own exploration--exploitation trade-off. A policy that restarts its statistics at intervals may either reset the oracle along with its own statistics or let the oracle keep the full history, a choice left to the instantiation.

\begin{algorithm}[t]
    \caption{IMABO ($\mathcal{A}$, $\mathcal{O}$)}
    \label{alg:imabo}
    \begin{algorithmic}[1]
    \STATE \textbf{Input:} space $\xcal$, oracle $\mathcal{O}$, bandit policy $\mathcal{A}$
    \STATE $\mathcal{M} \gets \emptyset$
    \FOR{$t = 1, 2, 3, \ldots$}
        \IF{$\mathcal{A}.\textsc{Expand}()$}
            \STATE $\vect{x} \gets \mathcal{O}.\textsc{Suggest}()$ \COMMENT{draw a new arm}
            \STATE $\mathcal{M} \gets \mathcal{M} \cup \{\vect{x}\}$
        \ELSE
            \STATE $\vect{x} \gets \mathcal{A}.\textsc{Select}()$ \COMMENT{pull a known arm}
        \ENDIF
        \STATE Serve $\vect{x}$ on the next request, observe reward $r$, update its statistics
    \ENDFOR
    \end{algorithmic}
\end{algorithm}

\subsection{The IMOSS Family}
\label{sec:imabo1}

In this paper, we use IMOSS as the bandit policy. IMOSS is an anytime bandit policy adapted from MOSS~\cite{audibert2009minimax}; pairing it with four different oracles yields the four concrete algorithms we study. IMOSS-Random draws new configurations from the baseline $P_0$, while the other three use past rewards to target promising regions of $\xcal$. IMOSS-TPE proposes from a density ratio fitted on the whole active set; IMOSS-mutate-KL$\times$PE instead improves the best arm found so far one coordinate at a time, using a one-dimensional Parzen estimator on the selected coordinate to propose local mutations; and IMOSS-TabPFN ranks its candidates with a tabular foundation model. \Cref{sec:oracle-ablation} compares the four and asks which to use on which problem.

\subsubsection{IMOSS}
\label{subsec:imoss}

We adapt MOSS~\cite{audibert2009minimax,degenne2016anytime} to the IMAB setting. Standard MOSS scores each arm from its observed rewards and serves the highest scorer, attaining the minimax-optimal cumulative regret $O(\sqrt{KT})$ for a $K$-armed bandit, but its score requires the horizon $T$ and a fixed arm count $K$, and neither is available here. IMOSS therefore uses the anytime MOSS index~\cite{degenne2016anytime}, which substitutes the current round $t$ for $T$; IMOSS recomputes this index at every round with the current active-set size $K_t=|\mathcal{M}_t|$ in place of $K$. It serves the arm of highest index
\begin{equation}
    \mathcal{B}_t(\vect{x})
    = \hat{\mu}_{\vect{x}}
    + \sqrt{
        \frac{1+\alpha}{2}\cdot\frac{\max\parenthese{0,\ \log\parenthese{\tfrac{t}{K_t\, n_{\vect{x}}}}}}{n_{\vect{x}}}
    }
    \label{eq:moss}
\end{equation}
where $\hat{\mu}_{\vect{x}}$ is the empirical mean reward of arm $\vect{x}$, $n_{\vect{x}}$ is the number of times it has been served, and $\alpha>0$ is a confidence parameter carried over from the anytime MOSS index~\cite{degenne2016anytime}. The square-root term is an exploration bonus that shrinks as $n_{\vect{x}}$ grows, and also shrinks as $K_t$ grows: admitting a new arm spreads the same budget over more candidates, so it lowers the bonus of every arm already in $\mathcal{M}_t$. The parameter $\alpha$ scales this bonus through the factor $(1+\alpha)/2$, a larger value pushing the policy toward exploration. In the experiments, we set $\alpha = 0.1$ as suggested by \citet{degenne2016anytime}.

Growth of the active set follows a power law similar to UCB-AIR~\cite{wang2008algorithms}. Specifically, a new arm is requested from the oracle whenever $|\mathcal{M}_t| < t^{\beta}$, with $\beta \in (0,1)$ fixed, yielding an active-set size of $\lfloor t^{\beta} \rfloor$. The exponent trades off discovery against estimation, as a set that grows too slowly delays finding a good region of $\xcal$, while one that grows too fast leaves each arm too few observations to be ranked reliably, and \Cref{thm:quantile-regret} quantifies this balance as a regret bound. In our experiments, we set $\beta=0.5$, so that the active set grows as $\sqrt{t}$, which we found empirically to strike this balance. A brief warm-up precedes the first round, drawing $N_s = 10$ configurations from $P_0$ by default and serving each once before the index takes over. Algorithm~\ref{alg:moss} states the resulting policy.

\toolong{IMOSS matches the best-known quantile-regret rate of QRM2~\cite{chaudhuri2018quantile}, which restarts its bandit on a doubling schedule, while remaining restart-free and anytime. Nothing here is specific to MOSS, and any minimax-optimal anytime index, such as KL-UCB or OCUCB~\cite{garivier2011klucb,lattimore2016ocucb}, could be paired with the same growth schedule.}


\begin{algorithm}[t]
    \caption{IMOSS ($\beta$, $\alpha$, $N_s$, proposal distribution $P_{\mathcal{O}}$)}
    \label{alg:moss}
    \begin{algorithmic}[1]
    \STATE \textbf{Input:} active set $\mathcal{M}_t$, round $t$, exponent $\beta \in (0,1)$, confidence $\alpha > 0$, warm-up size $N_s$
    \IF{$|\mathcal{M}_t| < N_s$}
        \STATE draw $\vect{x} \sim P_0$ \COMMENT{warm-up}
        \STATE $\mathcal{M}_t \gets \mathcal{M}_t \cup \{\vect{x}\}$
        \STATE \textbf{return} $\vect{x}$
    \ELSIF{$|\mathcal{M}_t| < t^{\beta}$}
        \STATE draw $\vect{x} \sim P_{\mathcal{O}}$ \COMMENT{admit a new arm}
        \STATE $\mathcal{M}_t \gets \mathcal{M}_t \cup \{\vect{x}\}$
        \STATE \textbf{return} $\vect{x}$
    \ELSE
        \STATE \textbf{return} $\displaystyle\argmax_{\vect{x} \in \mathcal{M}_t}\ \mathcal{B}_t(\vect{x})$ \COMMENT{serve a known arm}
    \ENDIF
    \end{algorithmic}
\end{algorithm}

\subsubsection{Theoretical Analysis}
\label{subsec:theoretical-analysis}

We bound the cumulative $\rho$-regret of IMOSS under a single top-$\rho$ coverage condition on the oracle: whenever a new arm is requested, the oracle proposes, given the history so far, a top-$\rho$ configuration with probability at least $p_\rho$. The analysis places no other restriction on the oracle, which enters the bound only through $p_\rho$; since $\mathrm{TOP}_\rho$ is defined relative to the baseline $P_0$, so is $p_\rho$. The oracle of IMOSS-Random samples from that same baseline, hence satisfies the condition with $p_\rho = \rho$ by construction, and a more sophisticated oracle is meant to raise this discovery probability. To bring such an oracle within the scope of the analysis, it suffices to mix it with the baseline in the spirit of $\varepsilon$-greedy: drawing the new arm from $P_0$ with probability $\varepsilon$ and querying the oracle otherwise guarantees $p_\rho \ge \varepsilon\rho$.

\begin{restatable}{theorem}{quantileregret}
    \label{thm:quantile-regret}
    Let $\rho \in (0,1)$. We assume that rewards are supported on $[0,1]$ and that each proposed configuration lies in $\mathrm{TOP}_\rho$ with conditional probability at least $p_\rho > 0$:
\begin{equation*}
\forall t \ge 1,~\mathbb P_{P_{\mathcal{O}}}(\vect{x}_t \in \mathrm{TOP}_\rho \mid \mathcal{F}_{t-1}) \ge p_\rho.
\end{equation*}
Then the expected cumulative $\rho$-regret of IMOSS satisfies
    \begin{align*}
        \mathbb E\bracket{\mathcal R_T^{\rho,+}}
        =
        O\!\parenthese{
            p_\rho^{-1/\beta}
            +
            T^{(1+\beta)/2}
        }.
    \end{align*}
\end{restatable}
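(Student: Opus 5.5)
We split the regret according to when the first top-$\rho$ arm enters the active set. Let $\tau$ be the first round at which an arm $\vect{x}^\dagger\in\mathrm{TOP}_\rho$ is admitted into $\mathcal{M}_t$. Arms are admitted at rounds where $|\mathcal{M}_t|<t^\beta$, so the $k$-th admitted arm enters at a round of order $k^{1/\beta}$. Warm-up arms come from $P_0$ and succeed with probability $\rho\ge p_\rho$ if we take $p_\rho\le\rho$ without loss of generality. By the coverage assumption and the tower property, the number $N$ of admissions before the first top-$\rho$ arm is therefore stochastically dominated by a geometric variable with parameter $p_\rho$. Since the per-round regret is at most $1$, the regret before $\tau$ is at most $\tau\lesssim N^{1/\beta}$. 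Using $\mathbb{E}[N^{1/\beta}]\le C_\beta\, p_\rho^{-1/\beta}$, which holds because geometric moments satisfy $\mathbb{E}[N^q]=O(q!\,p^{-q})$-type bounds, this phase contributes $O(p_\rho^{-1/\beta})$.

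After $\tau$, the active set always contains $\vect{x}^\dagger$, with $\mu_{\vect{x}^\dagger}\ge\mu_\rho$. Hence $(\mu_\rho-\mu_{\vect{x}_t})_+\le(\mu_{\vect{x}^\dagger}-\mu_{\vect{x}_t})_+$, and the remaining regret is at most the ordinary regret of the index policy against a fixed comparator in a growing $K_t$-armed bandit. We further split it into two parts. The \emph{admission rounds} number at most $T^\beta$ and cost at most $T^\beta\le T^{(1+\beta)/2}$. The \emph{index rounds} are analyzed in the MOSS style, adapting the anytime proof of \citet{degenne2016anytime} with $K_t=\lfloor t^\beta\rfloor$.

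The MOSS analysis has two steps.
\begin{enumerate}
\item \emph{Comparator index.} We show $\mathcal{B}_t(\vect{x}^\dagger)$ rarely falls far below $\mu_{\vect{x}^\dagger}$. Define $\delta_t=\max(0,\mu_{\vect{x}^\dagger}-\min_s \mathcal{B}_s(\vect{x}^\dagger))$ over a peeling of rounds. A maximal inequality over geometric blocks of $n$, combined with the $\log(t/(K_t n))$ bonus and $\alpha>0$, gives $\mathbb{E}\sum_t\delta_t\lesssim\sum_t\sqrt{K_t/t}\asymp T^{(1+\beta)/2}$.
\item \emph{Suboptimal arms.} Each arm $\vect{x}$ with gap $\Delta_{\vect{x}}$ is pulled while its index exceeds $\mu_{\vect{x}^\dagger}-\delta$. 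The standard argument charges each pull either to $\delta_t$ or to arms with $\Delta>\Delta_0$, each of which receives $O(\log(T/(K\Delta^2))/\Delta^2)$ pulls. Summing over at most $K_T=T^\beta$ arms with threshold $\Delta_0=\sqrt{K_T/T}$ yields $O(\sqrt{K_T T})=O(T^{(1+\beta)/2})$, by the usual MOSS bound $\sqrt{KT}$.
\end{enumerate}
Monotonicity of $t/K_t$, which follows from $\beta<1$, is what lets the anytime argument go through with a time-varying $K_t$.

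The main obstacle is step 1 with a growing $K_t$ and an oracle-dependent arm sequence. The comparator $\vect{x}^\dagger$ is a random arm chosen adaptively at random time $\tau$, so concentration must be applied conditional on its identity. We handle this by stacking rewards per arm, which is legitimate because rewards are i.i.d.\ given the arm. We then use a union over arm index slots $k\le T^\beta$, whose cost is absorbed by the $\log(t/(K_tn))$ normalization and the $\alpha$ slack. We also need that the shrinking bonus, as $K_t$ grows, still makes the peeling sums converge. We expect the $K_t$-dependence to cost only constants because $K_t\le t^\beta$ varies polynomially.

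Summing the three contributions gives $\mathbb{E}[\mathcal{R}_T^{\rho,+}]=O(p_\rho^{-1/\beta}+T^{(1+\beta)/2})$.
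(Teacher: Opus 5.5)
Your architecture matches the paper's. You use the same three-way split: discovery before the first $\mathrm{TOP}_\rho$ arm, at most $K_T$ admission rounds, and index rounds compared against that first top arm. You also use the same regularity of $t/K_t$ (the paper's condition $t/K_t\le\lambda T/K_T$) to dominate the time-varying bonus by a fixed-$K_T$, horizon-$T$ bonus when counting overestimation pulls. Your discovery bound through $\tau\lesssim N^{1/\beta}$ and geometric moments is the same computation as the paper's $\sum_t(1-p_\rho)^{K_t}$; the warm-up is better handled as at most $N_s$ extra admissions than by assuming $p_\rho\le\rho$. For the index rounds, read $\delta_t$ as the per-round shortfall $(\mu_{\vect{x}^\dagger}-\mathcal{B}_t(\vect{x}^\dagger))_+$. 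Then $\E\,\delta_t\lesssim\sqrt{K_t/t}$, obtained from a peeled maximal inequality with $m=t/K_t$, is a valid alternative to the paper's threshold peeling. The paper instead lower-bounds the comparator's bonus by the fixed-$K_T$ envelope $b_t^-$, so that the anytime MOSS lemma of Degenne and Perchet applies verbatim; that is how it makes your claim that the ``$K_t$-dependence costs only constants'' precise.

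The step that fails as written is your resolution of the main obstacle. A union bound over the $T^\beta$ arm slots is \emph{not} absorbed by the $\log(t/(K_tn))$ normalization. The $K_t$ inside that logarithm \emph{shrinks} the bonus, which is exactly why the comparator's underestimation probability is already of order $K_t/(t\Delta^2)$. An extra factor $K_T$ turns the bound on $\E\,\delta_t$ into $\sqrt{K_TK_t/t}$ and the total into $T^{(1+2\beta)/2}$, which is vacuous for $\beta\ge 1/2$. No union is needed. Sum over the slot at which the comparator is admitted: the event that this slot is the first top arm is determined at its admission (a stopping time; the paper conditions on $\mathcal{F}_{\tau-1}$), while the slot's subsequent rewards are i.i.d.\ given its identity, so the single-arm bound applies with no loss. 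Two smaller corrections. First, the overestimation count is $O\bigl((1+\log^+(T\Delta^2/K))/\Delta^2\bigr)$, not $\log(T/(K\Delta^2))/\Delta^2$; the latter puts an extra $\log(T/K)$ on arms with $\Delta$ near $\sqrt{K/T}$ and loses the $\sqrt{KT}$ rate. Second, for arms admitted before $\tau$, the gap $(\mu_{\vect{x}^\dagger}-\mu_{\vect{x}})_+$ depends on which arm the oracle later proposes after seeing their rewards. It is safer to measure both the gaps and $\delta_t$ against the fixed level $\mu_\rho\le\mu_{\vect{x}^\dagger}$; this leaves your charging argument intact and makes each arm's gap a function of its identity alone.
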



We provide detailed proofs for the regret bound in \Cref{thm:quantile-regret} and analyze the memory and running time of IMOSS, along with the per-call complexity of each oracle, in \Cref{sec:theory-analysis,sec:complexity}, respectively.

\subsubsection{IMOSS-TPE}
\label{subsec:tpe}

As a first oracle, we propose the Tree-structured Parzen Estimator (TPE)~\cite{bergstra2011algorithms}, which splits the arms of $\mathcal{M}_t$ at a score quantile $\gamma$ into a good group and a complementary bad group. It fits a Parzen density to each ($\ell$ and $g$, one kernel per parameter), and selects as the proposal, from a pool sampled from $\ell$, the candidate with the highest ratio $\ell(\vect{x})/g(\vect{x})$, a proxy for expected improvement. TPE requires no metric on $\xcal$ and handles continuous, integer, and categorical parameters natively, which suits our search spaces.
Standard TPE is defined for offline HPO, where each configuration is evaluated once and the split acts directly on the observed function values. In our setting, an arm instead accumulates noisy rewards over repeated pulls, so the split requires a per-arm score, and we order arms by the MOSS index $\mathcal{B}_t$ of Equation~\eqref{eq:moss} rather than by the empirical mean $\hat{\mu}_{\vect{x}}$, since an arm that appears strong after only a few pulls should not yet be classified as reliably good, and the index folds this uncertainty into the classification. This follows the optimistic principle used by index policies and by tree searches such as HOO~\cite{bubeck2011xarmed}. In our implementation, we set the good fraction to $\gamma = 0.3$, so the good group contains the top $30\%$ of arms, and each call draws $24$ candidates from $\ell$, matching Optuna's default TPE configuration. The same oracle applies unchanged when $\xcal$ is finite but too large to enumerate.\footnote{On a finite space the oracle may propose a configuration already in $\mathcal{M}_t$. IMOSS then serves that arm again rather than admit a duplicate; this deviation from the theory avoids requerying the oracle until it yields a fresh arm.}

\subsubsection[IMOSS-mutate-KL x PE]{IMOSS-mutate-KL$\times$PE}
\label{subsec:klxtpe}

TPE fits its density over a whole configuration, with one kernel per parameter,
so it samples every coordinate independently and can combine values from
different good arms into a proposal that matches no single one of them. Our
second oracle works the other way: it is \emph{local} and uses no global
surrogate model. It takes the best arm found so far $\hat{\vect{x}}$ and changes
exactly one of its parameters, in the spirit of bandit-driven local
search~\cite{liu2016banditrmhc,bregere2024mutantucb}. We split each such proposal into three steps:
\begin{itemize}
    \item \emph{(a) Which arm to change:} the arm of $\mathcal{M}_t$ with
    the highest empirical mean, $\hat{\vect{x}} = \argmax_{\vect{x} \in \mathcal{M}_t} \hat{\mu}_{\vect{x}}$. This is deliberately not
    the arm the allocation rule is pulling right now: MOSS ranks arms by their
    mean plus an exploration bonus, and mutating that optimistic pick instead of
    the best arm measurably hurts the oracle.
    \item \emph{(b) Which parameter to change:} a KL-UCB bandit~\cite{garivier2011klucb}
    over the $d$ coordinates. Choosing coordinate $i$ means ``mutate parameter
    $i$'', and its reward is the empirical mean of the arm that mutation
    produced. Each proposed arm contributes one such reward, and that reward is
    updated as later pulls refine the arm's mean rather than added again. We use
    the KL index instead of a Hoeffding bonus because the rewards this bandit
    sees sit near the top of $[0,1]$: there a Hoeffding confidence width is
    several times wider than the gaps it needs to tell apart, and the bandit
    would fall back to trying every parameter in turn.
    \item \emph{(c) Which value to give it:} a one-dimensional Parzen estimator
    (PE) on that coordinate alone. We split $\mathcal{M}_t$ into a good and a bad
    group exactly as in \Cref{subsec:tpe} and fit the univariate pair $\ell/g$ to the $i$-th coordinate values of the arms in each group. The oracle then draws $24$ candidate values
    from $\ell$ and keeps the one with the highest ratio $\ell/g$ that differs
    from the current value of $\hat{\vect{x}}$, so the mutation always changes
    something.
\end{itemize}
Until $10$ arms have each returned at least one reward, the oracle samples from
$P_0$ instead, since fitting a density and forming a good/bad split needs a
populated active set. On a finite space a mutation often lands on an arm already
in $\mathcal{M}_t$; that arm is then pulled again. The oracle fits a single
one-dimensional density per call and trains no surrogate model, so its per-call
cost is negligible next to the model-based oracle below.

\subsubsection{IMOSS-TabPFN}
\label{subsec:tabfm}

We instantiate the oracle with
TabPFN-3~\cite{hollmann2025tabpfn}, a tabular foundation model pretrained to
predict a target from a table of examples with no further training required.
We use the regression variant of the publicly released
model.\footnote{\url{https://huggingface.co/Prior-Labs/tabpfn_3}} At each
expansion step, the oracle builds a table from the arms tried so far, with
parameters as columns and mean rewards as the
label,\footnote{Providing all the pulls instead would give TabPFN the raw
observations and let arms with more evaluations carry more weight, yet in
our experiments the two choices give similar results, so we keep the more
compact per-arm table labeled by the mean reward.} and provides it to
TabPFN as in-context examples. The pool of $100$ candidates is built mostly \emph{around the best arm so far} $\hat{\vect{x}}$ rather than sampled from $P_0$. $90$ of the candidates are mutations of that arm: one of its parameters is picked at random and given a new value, drawn by a
Gaussian step whose width is $0.1$ of that parameter's range (taken in log
space for log-scaled parameters, and clamped to stay in range); for a
categorical parameter the new value is instead one of its other categories,
chosen uniformly. The remaining $10$ candidates are drawn from $P_0$, so the
pool always keeps some reach into unexplored regions. Drawing the whole pool
from $P_0$ instead puts no candidate near a good arm and measurably weakens
the oracle.

Before scoring, we clean up the pool: identical candidates are merged, and any
candidate that is already an open arm is dropped. This matters because the pool
is meant to pick the next \emph{new} arm to admit. On a finite space, without
this step, TabPFN would often score an already-open neighbor of $\hat{\vect{x}}$
above every new candidate; the oracle would then keep re-proposing arms it
already has, and the active set would stop growing and fall behind its
$t^{\beta}$ schedule.

Each surviving candidate is then scored by TabPFN. For a candidate $\vect{x}$,
we read the $q=0.975$ quantile $\hat{F}_{\vect{x}}^{-1}(q)$ of the predictive
distribution TabPFN returns, and propose the candidate with the largest such
quantile. This is an upper-confidence-bound rule in quantile
form~\cite{yu2025gitbo}: if the predictive distribution were Gaussian, ranking
by this quantile would be the same as ranking by
$\hat{\mu}(\vect{x}) + \kappa\,\hat{\sigma}(\vect{x})$ with
$\kappa = \Phi^{-1}(q) \approx 1.96$. Using the quantile directly is safer than
the $\hat{\mu}+\kappa\hat{\sigma}$ form, because TabPFN's predictive
distribution is a skewed, heavy-tailed histogram: the quantile always lands on
a value the model actually considers possible, whereas adding
$\kappa\,\hat{\sigma}$ can overstate the exploration bonus when the
distribution is far from Gaussian.

The predictive distribution comes from an ensemble of four estimators, each conditioned on the same table under a different ordering and preprocessing of the features. The table is refit and every candidate is rescored at each expansion step. Until $10$ arms have returned a reward, there are too few points to fit TabPFN, so the oracle samples from $P_0$ instead.

\section{Experiments}
\label{sec:experiments}

We evaluate IMABO across discrete, continuous, and mixed HPO search spaces using synthetic, machine-learning benchmarks and an online question-answering task.

\subsection{Toy Problems}
\label{sec:toy-problems}
We consider four synthetic benchmark functions, listed in \cref{tab:toy-functions} and illustrated in two dimensions in \cref{fig:toy-functions}. \textit{Sin1}, \textit{Garland} and \textit{Rastrigin} are additively separable: each is a sum of $d$ copies of the same one-dimensional profile, and the three differ only in how oscillatory that profile is. The \textit{Gaussian} function is \emph{not} separable: it places two modes on the main diagonal, a local one at $x_\ell\vect{1}$ and a better global one at $x_g\vect{1}$, where $\vect{1}$ is the all-ones vector.
\begin{table}[htbp]
    \caption{Toy functions}
    \centering
    \begin{tabular*}{\linewidth}{l @{\extracolsep{\fill}} l c}
    \toprule
    \textbf{Function} & \textbf{Formula} & \textbf{Domain} \\
    \midrule
    \textbf{Sin1} & $\displaystyle \sum_{i=1}^{d} \frac{\sin(13x_i)\sin(27x_i) + 1}{2}$ & $[0,1]^d$ \\
    \addlinespace[0.8em]
    \textbf{Garland} & $\displaystyle \sum_{i=1}^{d} 4x_i(1-x_i)\left(0.75 + 0.25\left(1 - \sqrt{|\sin(60x_i)|}\right)\right)$ & $[0,1]^d$ \\
    \addlinespace[0.8em]
    \textbf{Rastrigin} & $\displaystyle d + \frac{1}{40} \sum_{i=1}^{d} \bracket{10 \parenthese{\cos(2\pi x_i) - 1} - x_i^2}$ & $[-5.12,5.12]^d$ \\
    \addlinespace[0.8em]
    \textbf{Gaussian} & $\displaystyle 0.9\exp\!\Big(\!-\frac{\sum_{i=1}^d (x_i - x_g)^2}{2\sigma_d^2}\Big) + 0.6\exp\!\Big(\!-\frac{\sum_{i=1}^d (x_i - x_\ell)^2}{2\sigma_d^2}\Big)$ & $[0,1]^d$ \\
    \bottomrule
    \end{tabular*}
    \vspace{0.4em}
    {\footnotesize For \textit{Gaussian}: $\sigma_d = 0.35\,(x_g - x_\ell)\sqrt{d/2}$, with $x_\ell = \tfrac{1}{2\pi}$ and $x_g = 1 - \tfrac{1}{2\pi}$.\par}
    \label{tab:toy-functions}
\end{table}

\begin{figure}[htbp]
    \centering
    \includegraphics[width=\linewidth]{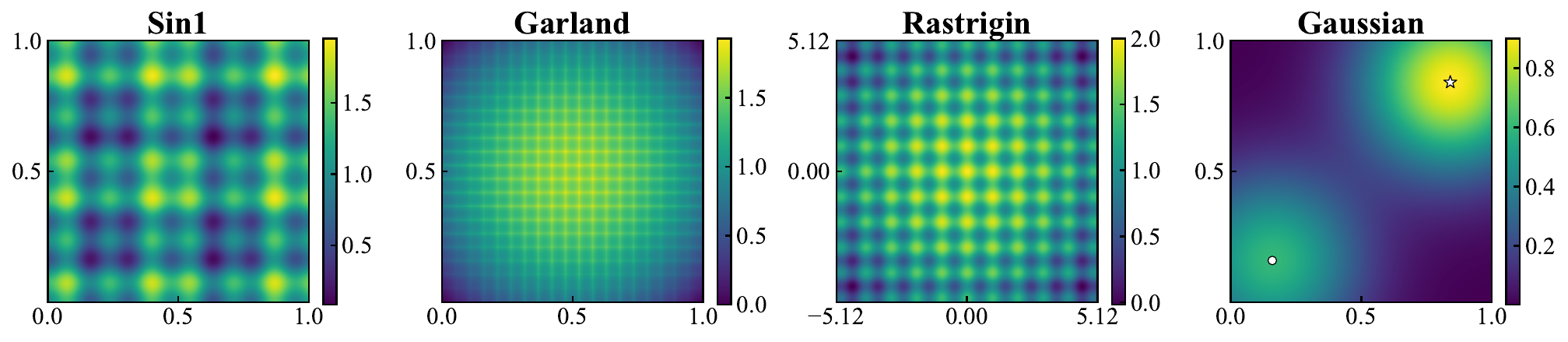}
    \caption{Illustration of the toy functions of
    \cref{tab:toy-functions} in two dimensions, over their native domains. The lattice structure of \textit{Sin1}, \textit{Garland} and \textit{Rastrigin} reflects their separability. The \textit{Gaussian} instead has two modes on the diagonal, the local one marked $\circ$ and the global one $\star$, and is separable in neither coordinate.}
    \label{fig:toy-functions}
\end{figure}

Since \textit{Sin1}, \textit{Garland} and \textit{Rastrigin} are sums of $d$ one-dimensional components, the scale of $f(\vect{x})$ grows linearly with $d$, so we work throughout with the normalized objective
\begin{equation}
    \label{eq:toy-normalization}
    g(\vect{x}) = \frac{f(\vect{x})}{d}
    \quad\text{for \textit{Sin1}, \textit{Garland}, \textit{Rastrigin},}
    \qquad
    g(\vect{x}) = f(\vect{x})
    \quad\text{for \textit{Gaussian},}
\end{equation}
since the \textit{Gaussian} function already takes values in $[0,1]$. At each evaluation, the algorithm observes a noisy value of $g$. For the three separable functions the noise is additive
\begin{equation}
    \label{eq:noise-model}
    r(\vect{x}) = g(\vect{x}) + \varepsilon,
    \qquad
    \varepsilon\sim\mathcal N(0,\sigma^2),
\end{equation}
whereas for the \textit{Gaussian} function $g(\vect{x})$ is a success probability and a pull returns a single binary observation, $r(\vect{x})\sim\operatorname{Bernoulli}\!\parenthese{g(\vect{x})}$, matching the observation model used in this paper.
The algorithms use only the noisy observations \(r(\vect{x})\). When reporting results,
we compute regret on the noiseless normalized objective.
The cumulative regret is defined as
\begin{equation*}
    \mathcal R_{T}
    =
    \sum_{t=1}^{T} \parenthese{g^* - g(\vect{x}_t)}
\end{equation*}
where $g^*=\max_{\vect{x}}g(\vect{x})$ and $\vect{x}_t$ is the configuration evaluated at step $t$. For the separable functions, $g^*$ is the optimum of a single one-dimensional component, so this normalization makes regret values comparable across dimensions.

\subsection{Comparison Against Baseline Algorithms}
\label{sec:toy-xarm-comparison}
We next compare IMOSS-TPE with three hierarchical optimistic optimization baselines, StoSOO~\cite{valko2013stochastic}, HOO-T~\cite{bubeck2011xarmed}, and StroquOOL~\cite{bartlett2019simple}, and with the factored baseline Hier-MAB~\cite{fu2024autorag}, which requires finite axes and runs on a grid of $11$ evenly spaced values per coordinate of the native domain (Hier-MAB-11). We consider the four-dimensional versions of all four functions of \cref{tab:toy-functions}. For each method, we use evaluation budgets $T \in \{1000,3000,5000,10000\}$ and repeat every experiment with $20$ independent random seeds. Observations follow the model of \cref{eq:noise-model}: additive Gaussian noise for \textit{Sin1}, \textit{Garland} and \textit{Rastrigin}, and a single Bernoulli draw for the \textit{Gaussian}. Regrets are computed using the corresponding noiseless objective.

The tree-based baselines construct their hierarchical partitions over the unit hypercube $[0,1]^d$, whereas the toy functions are defined on their native domains: $[0,1]^d$ for \textit{Sin1}, \textit{Garland} and the \textit{Gaussian}, and $[-5.12,5.12]^d$ for \textit{Rastrigin}. To ensure that all methods optimize over the same domain, each suggestion $\vect{z}\in[0,1]^d$ produced by a tree-based method is mapped back to the native search space before evaluation,
\begin{equation}
    \label{eq:toy-xarm-mapping}
    x_i = \ell_i + (u_i-\ell_i)z_i,
\end{equation}
where $[\ell_i,u_i]$ denotes the native range of coordinate $i$. The observed reward is then fed back to the tree-based method as the reward of the corresponding normalized point $\vect{z}$. This affine transformation preserves the hierarchical partition on $[0,1]^d$ while ensuring that objective values and regrets are evaluated over the same search space as IMOSS-TPE.

Once each budget is exhausted, every method returns a single recommendation $\vect{x}_{(T)}$, but the recommendation rules differ. IMOSS-TPE returns the evaluated configuration with the largest empirical mean in its active set. Hier-MAB returns its per-axis best values, the combination of the empirical-mean-maximizing value on each axis. StoSOO returns the empirical-mean maximizer at the maximum depth at which a cell has been expanded. HOO-T instead returns the visited node with the largest cached optimistic $B$-value at round $T$, using the same optimism-under-uncertainty criterion that guides its search. StroquOOL separates candidate generation from final validation: it constructs a shortlist by selecting the empirical-mean maximizer among nodes satisfying progressively larger sampling thresholds, evaluates each shortlisted candidate on a fresh independent batch, and returns the candidate with the largest validation mean. Recommendations produced in normalized tree coordinates are mapped back to the native search space before computing simple regret. Consequently, the reported simple regret reflects each algorithm's native recommendation rule, rather than a common post-processing rule applied to all methods.

\begin{figure*}[htbp]
    \centering
    \includegraphics[width=\textwidth]{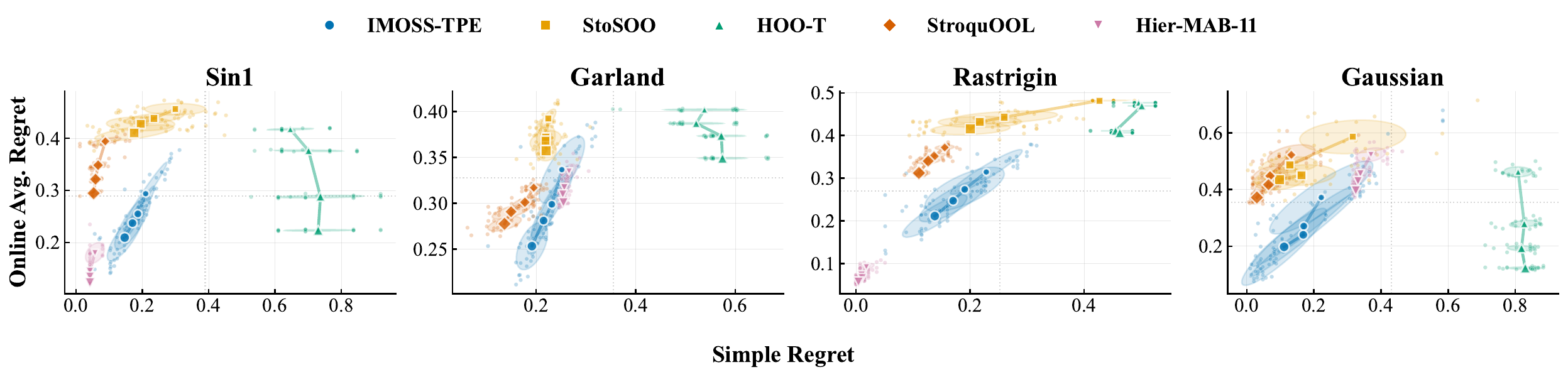}
    \caption{Trade-off between simple regret and online average regret $\overline{\mathcal{R}}_T$ (terminal value of the running average at budget $T$) on
    \textit{Sin1}, \textit{Garland}, \textit{Rastrigin} and \textit{Gaussian} in 4D. We report four evaluation budgets
    $T \in \{1000,3000,5000,10000\}$, with larger markers corresponding to
    larger budgets. Small dots show individual runs, and ellipses represent
    the empirical covariance across $20$ repetitions.}
    \label{fig:toy-xarm}
\end{figure*}

Figure~\ref{fig:toy-xarm} reports the joint behavior of simple regret and online average regret $\overline{\mathcal{R}}_T$. Each large marker represents the mean performance at one evaluation budget, with marker size increasing with $T$. Small dots show individual runs, and the ellipses show one-standard-deviation contours of the empirical covariance across the $20$ repetitions. The lower-left region is preferable, corresponding to both an accurate final recommendation and a low average regret.

Against the three tree-based baselines, IMOSS-TPE achieves the most balanced trade-off. On \textit{Sin1} it maintains substantially lower average regret than all three while reaching competitive simple regret, and on \textit{Garland} it keeps a smaller but consistent advantage on average regret. StroquOOL often produces a better final recommendation, but at a higher average regret, whereas HOO-T exhibits persistently high simple regret. The same pattern holds on \textit{Rastrigin}: at $T=10000$, IMOSS-TPE obtains better online average regret than StroquOOL at comparable simple regret, and StoSOO and HOO-T are worse on both criteria.

Hier-MAB-11 behaves exactly as its structural assumption predicts. The first three functions are additively separable across coordinates, the ideal regime for coordinate-wise credit assignment, and on \textit{Sin1} and \textit{Rastrigin} it attains both the best recommendation and the lowest average regret of all methods. On \textit{Garland}, whose per-coordinate profile oscillates rapidly, its $11$-point grid cannot resolve the axis optima and its recommendation falls behind every method except HOO-T. On the non-separable \textit{Gaussian}, Hier-MAB-11 falls behind IMOSS-TPE on both criteria at every budget, and here the grid is not the obstacle, since it does contain a point close to the global mode: its per-axis best values settle on the grid point nearest the local mode, which no single-coordinate move can leave.

These differences are consistent with the algorithms' recommendation rules. StroquOOL explicitly revalidates shortlisted candidates and therefore favors simple regret, while HOO-T selects its recommendation by its optimistic $B$-value. IMOSS-TPE recommends the empirical-mean maximizer and uses MOSS to control the cost incurred throughout optimization. Overall, IMOSS-TPE is the only method that occupies the favorable region of the empirical Pareto frontier on all four functions. Hier-MAB-11 beats it where its factorization is exactly well-specified and its grid is fine enough, but degrades sharply on \textit{Garland} and on the non-separable \textit{Gaussian}, while the tree-based baselines never combine competitive recommendations with low online regret.

\subsection{Impact of the IMOSS bandit policy}
\label{sec:moss-allocation-impact}
We investigate the effect of the IMOSS allocation by comparing IMOSS-TPE with fixed-replication TPE variants implemented using Optuna~\cite{akiba2019optuna}. For these baselines, denoted TPE-$k$, each configuration suggested by TPE is evaluated exactly $k$ times before a new configuration is requested. Optuna observes the empirical mean
\[
    \widehat{\mu}(\vect{x})
    =
    \frac{1}{k}\sum_{j=1}^{k} r_j(\vect{x})
\]
Under an evaluation budget $T$, TPE-$k$ therefore explores approximately $T/k$ distinct configurations. In contrast, IMOSS-TPE maintains an active set of $\Theta(T^\beta)$ configurations and adaptively chooses between admitting a new configuration through TPE and re-evaluating an existing one through MOSS. The two methods therefore allocate the same budget differently: TPE-$k$ controls the number of distinct configurations through $k$, whereas IMOSS-TPE trades configuration-space coverage against the accuracy with which existing configurations are evaluated.
The two methods evaluate numbers of distinct configurations of the same order when
\[
    \frac{T}{k}\simeq T^\beta,
    \qquad\text{or equivalently}\qquad
    k\simeq T^{1-\beta}.
\]
In our experiments, we use $T=5000$ evaluations on the four-dimensional versions of these functions and $\beta=0.5$, giving a crossover near $k=71$. Thus, TPE-50 evaluates more distinct configurations than IMOSS-TPE, TPE-70 evaluates approximately the same number, and the larger-$k$ baselines evaluate fewer.

\begin{figure*}[htbp]
    \centering
    \includegraphics[width=\textwidth]{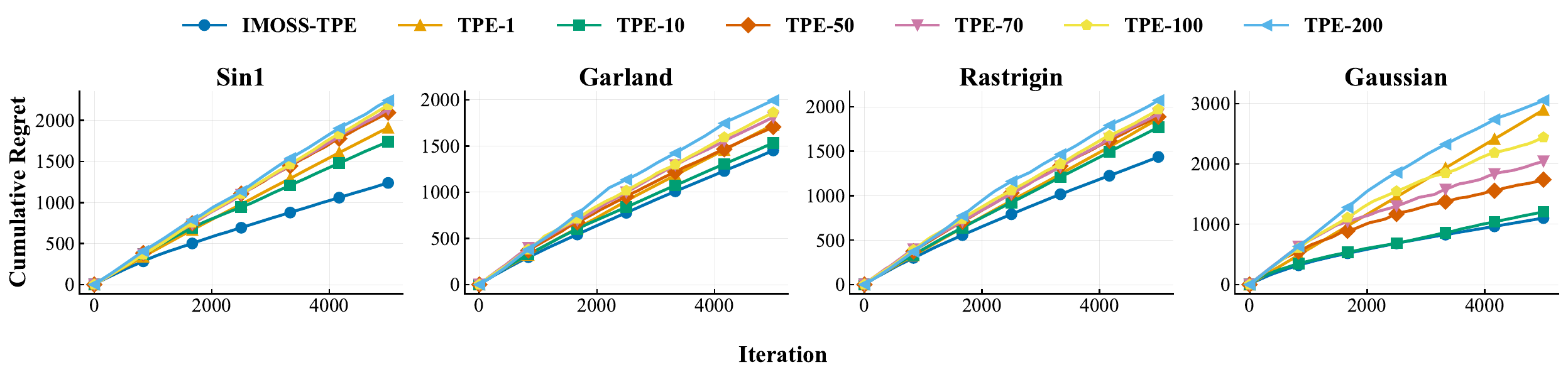}
    \caption{Cumulative regret over 5000 evaluations on the four-dimensional \textit{Sin1}, \textit{Garland}, \textit{Rastrigin} and \textit{Gaussian} functions. IMOSS-TPE is compared with fixed-replication TPE-$k$ baselines for $k\in\{1,10,50,70,100,200\}$. The observation noise follows \cref{eq:noise-model} with $\sigma=0.7$ for \textit{Sin1}, \textit{Garland} and \textit{Rastrigin}, and is a single Bernoulli draw for the \textit{Gaussian}.}
    \label{fig:cumulative-regret-comparison}
\end{figure*}

\Cref{fig:cumulative-regret-comparison} shows that IMOSS-TPE achieves lower cumulative regret than every TPE-$k$ baseline on all four functions. Importantly, this advantage persists for TPE-50 and TPE-70, whose surrogates are fitted using at least as many distinct configurations as the IMOSS-TPE surrogate. The cumulative-regret improvement therefore cannot be explained solely by the number of configurations available to TPE. Rather, it reflects the benefit of adaptive evaluation allocation: TPE-$k$ commits $k$ evaluations to every suggestion, including configurations that appear unpromising after only a few observations, whereas IMOSS reallocates subsequent evaluations according to both observed performance and uncertainty.

\subsection{HPOBench}
\label{sec:hpobench}

HPOBench~\cite{eggensperger2021hpobench} collects HPO benchmarks built from precomputed model evaluations, reporting validation performance for predefined configurations and evaluation budgets. Configurations can therefore be served repeatedly online without retraining the underlying models, and the known validation results allow exact regret computation.

\subsubsection{HPO on Discrete Parameters}
\label{sec:openml-hpo}

\begin{figure*}[!ht]
\centering
\includegraphics[width=0.9\textwidth]{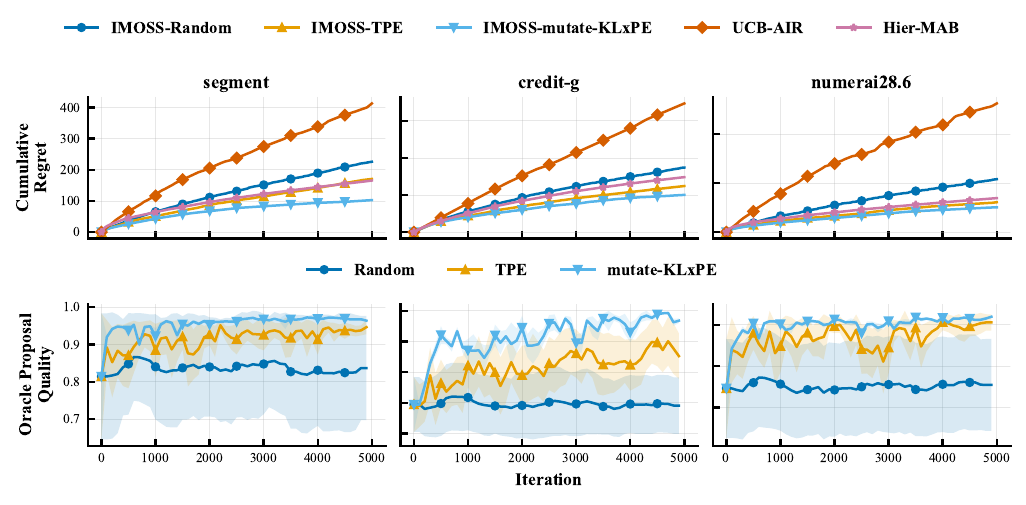}
\caption{Cumulative regret (upper row, lower is better) and true mean reward of the arms each oracle proposes (lower row, higher is better) on the three discrete OpenML HPO problems, averaged over $30$ paired runs. UCB-AIR and Hier-MAB have no oracle and appear only in the upper row.}
\label{fig:openml-hpo-cumulative-regret}
\end{figure*}

We use the random forest benchmarks provided by HPOBench for three OpenML classification tasks, \texttt{segment}, \texttt{credit-g}, and \texttt{numerai28.6}. An arm $\vect{x}$ is a random-forest configuration defined by \texttt{max\_depth}, \texttt{max\_features}, \texttt{min\_samples\_leaf}, and \texttt{min\_samples\_split}; discretizing these into evenly spaced values yields $K=2250$ arms per task. Every configuration uses $512$ trees and the full training set (subsample fraction $1$), the largest evaluation budget available in the benchmark, and an arm's mean reward $p_{\vect{x}}$ is its validation accuracy averaged over five training seeds. We turn this offline benchmark into an online one by binarization: pulling $\vect{x}_t$ at round $t$ returns $r_t \sim \operatorname{Bernoulli}(p_{\vect{x}_t})$, so the optimizer observes a noisy binary outcome rather than the true mean. Since all means are known, we set $p^\star=\max_{\vect{x}\in\mathcal{X}}p_{\vect{x}}$ and compute cumulative regret $\mathcal{R}_T$ exactly. We compare IMOSS-Random, IMOSS-TPE and IMOSS-mutate-KL$\times$PE, which share the same active-set schedule and MOSS allocation rule with $\beta=0.5$ and differ only in their oracle. UCB-AIR serves as a restart-free infinitely many-armed baseline, and Hier-MAB, the two-level hierarchical bandit of AutoRAG-HP~\cite{fu2024autorag}, as a factored one: a high-level UCB1 bandit picks which hyperparameter to perturb and one low-level UCB1 bandit per hyperparameter picks its value, while every other coordinate keeps its per-axis best value: the value with the highest mean reward on that axis alone. Crediting the reward to the perturbed coordinate therefore presumes near-separable rewards. For every task and method, we perform thirty paired runs with $T=5000$ pulls.

\Cref{fig:openml-hpo-cumulative-regret} compares cumulative regret and oracle proposal quality across the three discrete HPO tasks. Every IMABO variant accumulates less regret than UCB-AIR, including IMOSS-Random, indicating that the active-set schedule and MOSS allocation rule already help before any learned oracle is added. The two learned oracles are strongest overall and IMOSS-mutate-KL$\times$PE attains the lowest regret on all tasks: \texttt{segment}, \texttt{credit-g}, and \texttt{numerai28.6}. The lower row, measured on a copy of the optimizer state so these diagnostic proposals never alter the trajectory, explains much of this. Random proposals stay at constant quality by construction, whereas both learned oracles quickly begin proposing arms of substantially higher true mean reward.
On \texttt{credit-g}, mutate-KL$\times$PE sustains a high, stable proposal quality whereas TPE is lower and markedly more variable, consistent with a sparse optimum depending on two hyperparameters at once. On \texttt{segment} and \texttt{numerai28.6}, whose rewards vary mainly with \texttt{max\_depth}, both learned oracles reach high proposal quality without difficulty; on \texttt{numerai28.6} many arms additionally share near-identical means, so the residual gap between the oracles translates into only mild differences in accumulated regret.

\subsubsection{HPO on Continuous Parameters}
\label{sec:continuous-hpo}

\begin{figure*}[!ht]
    \centering
    \includegraphics[width=0.95\textwidth]{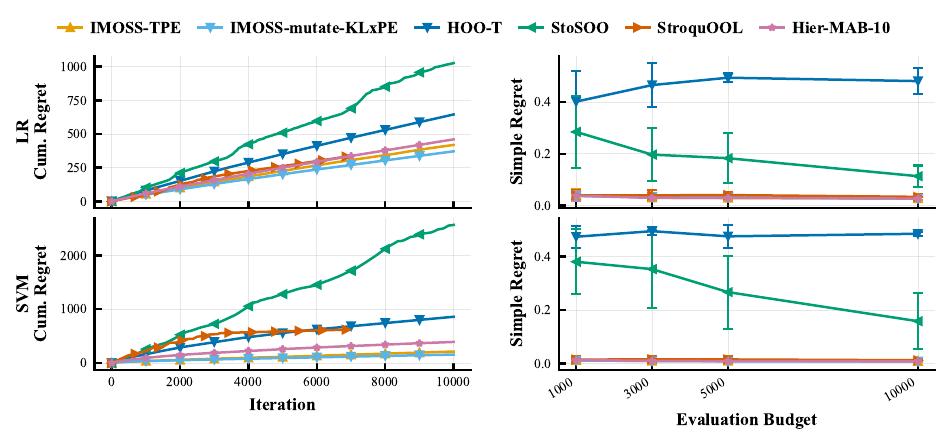}
    \caption{Cumulative regret (left) and simple regret of the recommended configuration (right) on the two continuous OpenML HPO problems, Logistic Regression (LR) and Support Vector Machine (SVM). Lower is better. StroquOOL's curve ends after $6866$ evaluations because its schedule is built to spend at most $T=10000$ evaluations, and it stops once that schedule is complete.}
    \label{fig:cumulative-regret-comparison-lr-svm-hpo}
\end{figure*}

We evaluate IMOSS-TPE and IMOSS-mutate-KL$\times$PE on the Logistic Regression and Support Vector Machine benchmarks associated with the \texttt{kr-vs-kp} dataset (OpenML task~167149).\footnote{\url{https://www.openml.org/t/167149}} Both define two-dimensional continuous search spaces with logarithmic scaling: the SGD regularization parameter $\alpha$ and the initial learning rate $\eta_0$ over $[10^{-5},1]$ for Logistic Regression, and $C$ and $\gamma$ over $[2^{-10},2^{10}]$ for Support Vector Machine, both with default benchmark settings.
For each evaluation budget $T$, we perform $20$ independent runs. We compare both IMABO variants, with $\beta=0.5$, against StoSOO~\cite{valko2013stochastic}, StroquOOL~\cite{bartlett2019simple}, and HOO-T~\cite{bubeck2011xarmed}. None of these three baselines is anytime, so we give each of them the horizon $T$ in advance, information that the IMABO family does not need. We use the horizon-aware variant HOO-T because the per-round cost of the anytime original grows quadratically with $T$, which is impractical at our budgets. We also run Hier-MAB, which requires every hyperparameter to range over an explicit finite set, with both axes discretized on geometrically spaced grids of $10$ values per axis (Hier-MAB-10). For the tree-based methods, each logarithmic search space is normalized to the unit square $[0,1]^2$ and suggested points are mapped back to the original ranges before evaluation.

\Cref{fig:cumulative-regret-comparison-lr-svm-hpo} compares cumulative regret during optimization (left) and simple regret of the final recommendation (right). The two IMABO variants, IMOSS-TPE and IMOSS-mutate-KL$\times$PE, obtain the lowest cumulative regret on both tasks without a pre-declared discretization of the axes, and both reach near-zero simple regret. On simple regret, they are matched by StroquOOL and Hier-MAB-10, which also converge to near-zero final regret, so the methods separate mainly through cumulative regret rather than through the quality of the final recommendation. We compare against HOO-T, the only baseline explicitly designed to control cumulative regret, and StroquOOL, whose simple-regret guarantees are state of the art under unknown smoothness and noise. On Logistic Regression, the IMABO variants and StroquOOL are essentially tied for the lowest cumulative regret, whereas on Support Vector Machine the IMABO variants are clearly ahead, with StroquOOL accumulating far more regret while sweeping its schedule. StoSOO incurs the highest cumulative regret on both tasks, and its simple regret, although it does decrease with the budget, remains far above that of the IMABO variants; HOO-T is the weakest on simple regret, which stays high on both tasks and does not improve with the budget. On Logistic Regression, StroquOOL ends below HOO-T on cumulative regret even though only HOO-T targets that metric. Hier-MAB-10 is competitive on Logistic Regression but stays clearly behind IMABO on Support Vector Machine.
\subsection{LLM Question Answering on HotpotQA}
\label{sec:hotpotqa}
HotpotQA~\cite{yang2018hotpotqa} is an open-domain question-answering benchmark built from Wikipedia, in which each example pairs a question and its answer with supporting facts drawn from several articles, so answering requires combining information across documents. We use the retrieval version distributed through BEIR~\cite{thakur2021beir}.
Each question is answered by a retrieval-augmented generation pipeline: a retriever selects the \texttt{top\_k} Wikipedia passages by cosine similarity, and a large language model generates the answer from a prompt containing the question and those passages. At round $t$, the optimizer selects the complete pipeline configuration $\vect{x}_t$ before observing question $q_t$, so it must find a configuration that performs well across the stream rather than adapt to each question.

Retrieval uses \texttt{all-MiniLM-L6-v2} embeddings~\cite{reimers2019sentence}, precomputed for all Wikipedia documents and normalized to unit $\ell_2$ norm, as are the encoded questions.
A configuration includes the generation model, the prompt template, the retrieval depth \texttt{top\_k} from $1$ to $10$, and the generation temperature in $[0,1]$. Prompt templates are \texttt{few\_shot} with fixed examples, \texttt{zero\_shot} with instructions but no examples, and \texttt{naive} with only the question and retrieved documents. The full model list, prompt templates, and generation details are given in \Cref{sec:hotpotqa-details}.

For each question $q$, answer quality $a_q(\vect{x})$ is the normalized token-overlap F1 between the generated and gold answers, and retrieval quality $r_q(\vect{x})$ is the title-level F1 between the retrieved and gold supporting documents. The reward combines both as \mbox{$w \cdot a_q(\vect{x})+(1-w)\cdot r_q(\vect{x})$} with $w=0.8$, and we report online average regret $\overline{\mathcal{R}}_T=\mathcal{R}_T/T$. After the stream, each method returns its best configuration, which we evaluate on a disjoint set of unseen questions and report as average per-question simple regret. For each of five seeds, we randomly permute the test questions and use $5000$ for the online stream and a disjoint $500$ for final evaluation, giving all methods the same split and question order. 
We compare IMOSS-TPE, IMOSS-mutate-KL$\times$PE and IMOSS-TabPFN with Random and Hier-MAB. Random selects configurations uniformly without using previous observations, and Hier-MAB requires explicit finite axes, so its temperature, the only continuous axis, is discretized to eleven evenly spaced values.

\begin{figure*}[!ht]
    \centering
    \includegraphics[width=\textwidth]{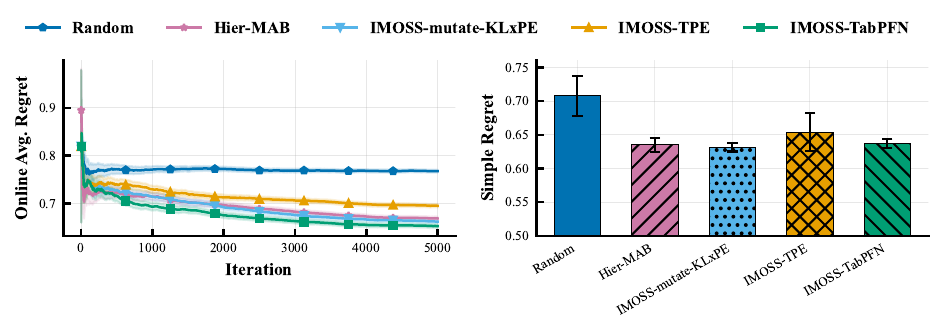}
    \caption{Online average regret and simple regret on HotpotQA, averaged over $5$ seeds; shaded bands and error bars show standard deviations across seeds. Lower is better.}
    \label{fig:hotpotqa-results-5000samples}
\end{figure*}

In \Cref{fig:hotpotqa-results-5000samples}, both $\hat{\vect{x}}$-mutation variants, IMOSS-mutate-KL$\times$PE and IMOSS-TabPFN, improve on Random on both metrics and on Hier-MAB online, and their online advantage widens as more questions are observed, consistent with the learned oracles making greater use of the accumulated observations. IMOSS-TPE, by contrast, trails Hier-MAB on both metrics. HotpotQA is a near-separable space, the regime in which coordinate-local search is strongest: the mutation oracles refine the best configuration one coordinate at a time and Hier-MAB holds every axis except one at its per-axis best, so both keep the active set concentrated around the strongest configuration. IMOSS-TPE instead admits arms from across the whole good region; although each proposal is individually reasonable, the active set is more dispersed, so more pulls land away from the best accumulated configuration, raising online regret while the best neighborhood is refined less and leaving the final recommendation behind as well.
IMOSS-TabPFN attains the lowest online average regret, with IMOSS-mutate-KL$\times$PE and Hier-MAB close behind. On simple regret the two $\hat{\vect{x}}$-mutation variants and Hier-MAB are statistically indistinguishable and all clearly below Random, so Hier-MAB matches them on the quality of the final recommendation while trailing them online. Overall, the methods that keep the active set concentrated around the strongest configuration, the two mutation oracles and Hier-MAB, dominate on both metrics, while IMOSS-TPE, the only method that proposes across the whole good region, trails even Hier-MAB; online average regret separates more sharply than simple regret does.

\section{Practical Usage}
\label{sec:practical}

We propose extensions of IMABO that address specific problems encountered when running it in production.
\subsection{Delayed and censored feedback}
In many applications the reward is not available the moment a configuration is tried. For instance, it may depend on user feedback that follows the interaction, and this feedback may arrive after an unknown delay or fail to arrive at all. The policy must keep selecting configurations without waiting for these rewards, so at any round it faces a mixture of pulls that have already returned a reward and pulls that are still pending. Write $n^{\text{rew}}_{\vect{x}}$ and $n^{\text{pend}}_{\vect{x}}$ for the completed and pending pulls of arm $\vect{x}$, and $N^{\text{rew}}$ and $N^{\text{pend}}$ for their totals over all arms. Let $p \in (0,1]$ be the fraction of pulls that eventually return a reward, estimated from the logs. IMABO counts each pending pull as a fraction $p$ of a reward, which lets both the expansion rule and the index run on the rewards that the pending pulls will produce in expectation,
\begin{equation}
    \label{eq:delayed-feedback}
    \tilde{N}_t = N^{\text{rew}} + p\, N^{\text{pend}},
    \qquad
    \widetilde{\mathcal{B}}_t(\vect{x}) = \hat{\mu}_{\vect{x}} + \sqrt{\frac{1+\alpha}{2}\cdot\frac{\max\!\big(0,\ \log\!\big(\tilde{N}_t / (\tilde{N}_t^{\,\beta}\, \tilde{n}_{\vect{x}})\big)\big)}{\tilde{n}_{\vect{x}}}},
\end{equation}
where $\tilde{n}_{\vect{x}} = n^{\text{rew}}_{\vect{x}} + p\, n^{\text{pend}}_{\vect{x}} + 1$ and $\hat{\mu}_{\vect{x}}$ averages the completed rewards of the arm. A new arm enters whenever $|\mathcal{M}_t| < \tilde{N}_t^{\,\beta}$. With $p = 1$ and $N^{\text{pend}} = 0$ every pull has already returned its reward and the rule reduces to~\eqref{eq:moss}. A pull that remains pending beyond a fixed window is dropped from the count.

\subsection{Experimental evaluation}
\label{sec:delayed-feedback-experiment}

We evaluate the delay-aware switching rule above against a delay-oblivious
variant and a no-delay skyline on LCBench, accessed through the YAHPO-Gym
surrogate, which models AutoPyTorch multilayer-perceptron training on OpenML
tabular tasks~\cite{zimmer2021autopytorch}. 
We use three instances:
\texttt{higgs} (167200), \texttt{APSFailure} (168868), and
\texttt{Fashion-MNIST} (189908). Each defines the seven-dimensional search
space of \Cref{tab:lcbench-search-space}. The surrogate returns a validation
accuracy in $[0,100]$, rescaled to a success probability $p \in [0,1]$; the reward for each pull is a Bernoulli$(p)$ random variable. As the maximum achievable accuracy $p^\star$ has no
closed form on a continuous space, we estimate it once per instance from a dense random sample of $100{,}000$ configurations.

\begin{table}[htbp]
    \caption{LCBench search space (seven hyperparameters, mixed
    continuous and finite). Bounds and log-scale flags are read from the
    surrogate's configuration space.}
    \centering
    \small
    \begin{tabular*}{\linewidth}{l @{\extracolsep{\fill}} ll}
    \toprule
    \textbf{Hyperparameter} & \textbf{Type} & \textbf{Range} \\
    \midrule
    \texttt{num\_layers}    & Categorical & $\{1,\dots,5\}$ \\
    \texttt{batch\_size}    & Integer (log-scaled) & $[16, 512]$ \\
    \texttt{max\_units}     & Integer (log-scaled) & $[64, 1024]$ \\
    \texttt{learning\_rate} & Continuous (log-scaled) & $[10^{-4}, 10^{-1}]$ \\
    \texttt{momentum}       & Continuous & $[0.1, 0.99]$ \\
    \texttt{weight\_decay}  & Continuous & $[10^{-5}, 10^{-1}]$ \\
    \texttt{max\_dropout}   & Continuous & $[0.0, 1.0]$ \\
    \bottomrule
    \end{tabular*}
    \label{tab:lcbench-search-space}
\end{table}

Each configuration's feedback delay is set by its own predicted training time,
returned in seconds alongside the accuracy by the surrogate. To express this
runtime in simulator steps, we divide it by a fixed per-instance constant
$\tau$,
\begin{equation*}
    \texttt{steps}(\vect{x}) = \frac{\texttt{runtime\_seconds}(\vect{x})}{\tau},
    \qquad
    \tau = \frac{\tilde{t}}{6},
\end{equation*}
where $\tilde{t}$ is the median predicted runtime over the
$100{,}000$-configuration sample. Choosing $\tau = \tilde{t}/6$ fixes the
step scale by anchoring it to the median configuration, whose runtime is
$\tilde{t}$ by definition. A stochastic delay is then obtained by scaling
$\texttt{steps}(\vect{x})$ with a multiplicative log-normal jitter that
represents queueing noise (a job with predicted runtime $t$ does not always take
exactly $t$),
\begin{equation*}
    \texttt{delay}(\vect{x}) =
    \texttt{delay\_scale} \times \texttt{steps}(\vect{x})
    \times \xi,
    \qquad
    \xi \sim \mathrm{LogNormal}(0, \sigma^2),
\end{equation*}
with $\sigma = 0.5$ (median jitter $1$) and $\texttt{delay\_scale} = 1$.
Feedback is then lost in two independent ways: a configuration-independent
Bernoulli filter admits a pull only with probability
$\texttt{feedback\_freq} = 0.2$ (crashed or never-submitted jobs, $80\%$
dropped outright), and a patience window drops any admitted pull whose delay
exceeds the $q = 0.95$ quantile of that instance's delay distribution
($32$, $33$, and $19$ steps for \texttt{higgs}, \texttt{APSFailure}, and
\texttt{Fashion-MNIST}). A pull is therefore never observed with probability
\begin{equation*}
    \mathbb{P}(\text{never observed})
    = (1 - \texttt{feedback\_freq})
    + \texttt{feedback\_freq} \cdot \mathbb{P}(\texttt{delay} >
    \texttt{patience}).
\end{equation*}
%
We consider three variants of IMOSS-TPE: \emph{IMOSS-TPE
Delayed} is the delay-aware switching rule above, run under delayed and censored feedback;
\emph{IMOSS-TPE Naive} is the identical optimizer exposed to the same delayed and censored
environment but ignoring pending pulls, isolating the cost of
delay-blindness; \emph{IMOSS-TPE} is that same optimizer under instant
feedback, the no-delay skyline. \emph{UCB-AIR} is an infinitely many-armed bandit
reference, also run under instant feedback. All four methods use $\beta = 0.5$.

\begin{figure*}[!ht]
    \centering
    \includegraphics[width=\textwidth]{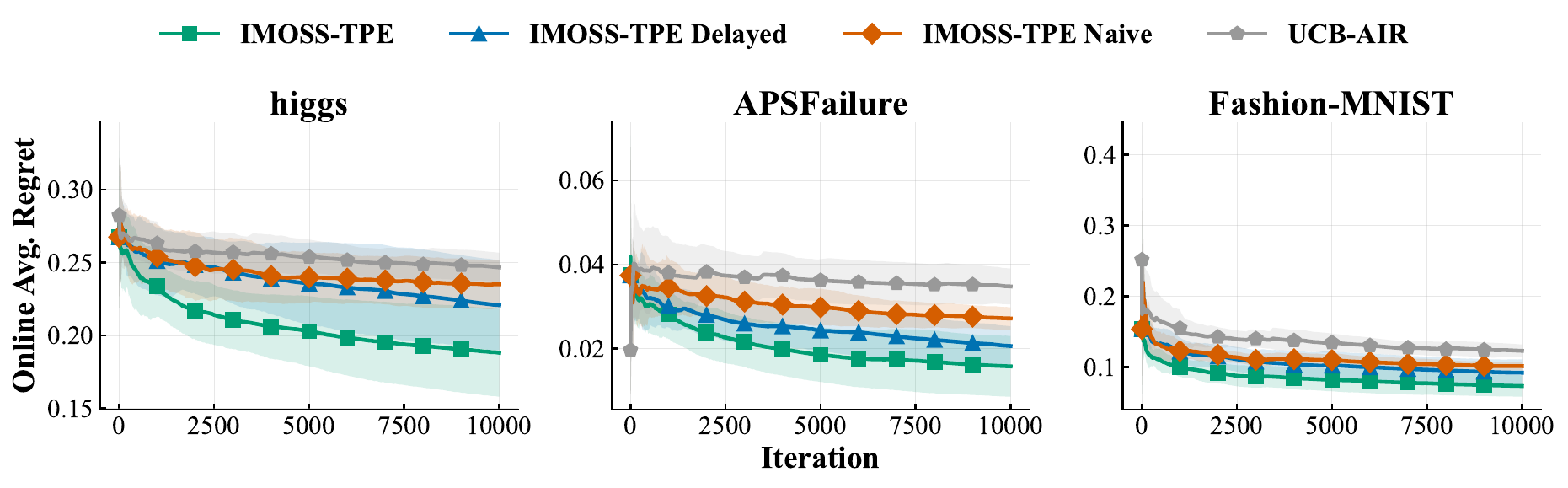}
    \caption{Delayed and censored feedback on three LCBench instances
    (\texttt{higgs}, \texttt{APSFailure}, \texttt{Fashion-MNIST}). Each panel
    plots online average regret $\overline{\mathcal{R}}_T$; lower is better. Curves
    are means over $10$ seeds and shaded bands are $\pm 1$ standard deviation.
    Feedback is delayed by each configuration's predicted training time (median
    $6$ steps, log-normal jitter $\sigma = 0.5$) and censored by a Bernoulli
    filter ($\texttt{feedback\_freq} = 0.2$) plus a per-instance $q = 0.95$
    patience window ($32$, $33$, $19$ steps), so only about $19\%$ of pulls ever
    return a reward. \emph{IMOSS-TPE} (green) is the no-delay skyline and
    \emph{UCB-AIR} (grey) is a no-delay reference baseline; \emph{IMOSS-TPE
    Delayed} (blue) and \emph{IMOSS-TPE Naive} (orange) run the same optimizer
    under delayed feedback, with and without the delay-aware switching rule.}
    \label{fig:cumulative-delayed-feedback}
\end{figure*}

\Cref{fig:cumulative-delayed-feedback} reports online average regret over
$T = 10000$ iterations with $10$ seeds. The ordering is consistent across
all three instances: the delay-aware rule sits between the delay-blind Naive
variant and the no-delay skyline, closing most of the gap to the skyline. The
two delayed methods share the environment and receive the same censored stream
--- only about $19\%$ of pulls ever return a reward --- so the gap between them
is caused by the switching rule alone.

The mechanism is made explicit in \Cref{fig:active-set}. The Naive rule admits
a new arm whenever $|\mathcal{M}_t| < t^{\beta}$, using the raw step count, so
at the horizon it holds the full $t^{\beta} = \sqrt{T} \approx 100$ arms even
though most of those steps never returned a reward. The delay-aware rule
instead throttles admission by the effective count $\tilde{N}_t$
of~\eqref{eq:delayed-feedback}: because roughly $80\%$ of pulls never arrive,
$\tilde{N}_t \ll t$ and the active set settles at about $60$ arms. Both delayed
methods observe the same $\approx\!1{,}900$ rewards, so spreading them over
fewer arms substantially raises the observations per arm for the delay-aware
rule over the Naive one, though both stay well below the skyline, which
observes every pull. More observations per arm means tighter
arm-mean estimates, which is what lets the delay-aware rule rank candidates
reliably and recover most of the distance to the skyline, while the Naive rule
pays for exploring arms it cannot resolve. Despite discarding roughly $80\%$ of
feedback, the delay-aware optimizer still finishes below UCB-AIR, which sees
instant feedback.

\begin{figure*}[!ht]
    \centering
    \includegraphics[width=\textwidth]{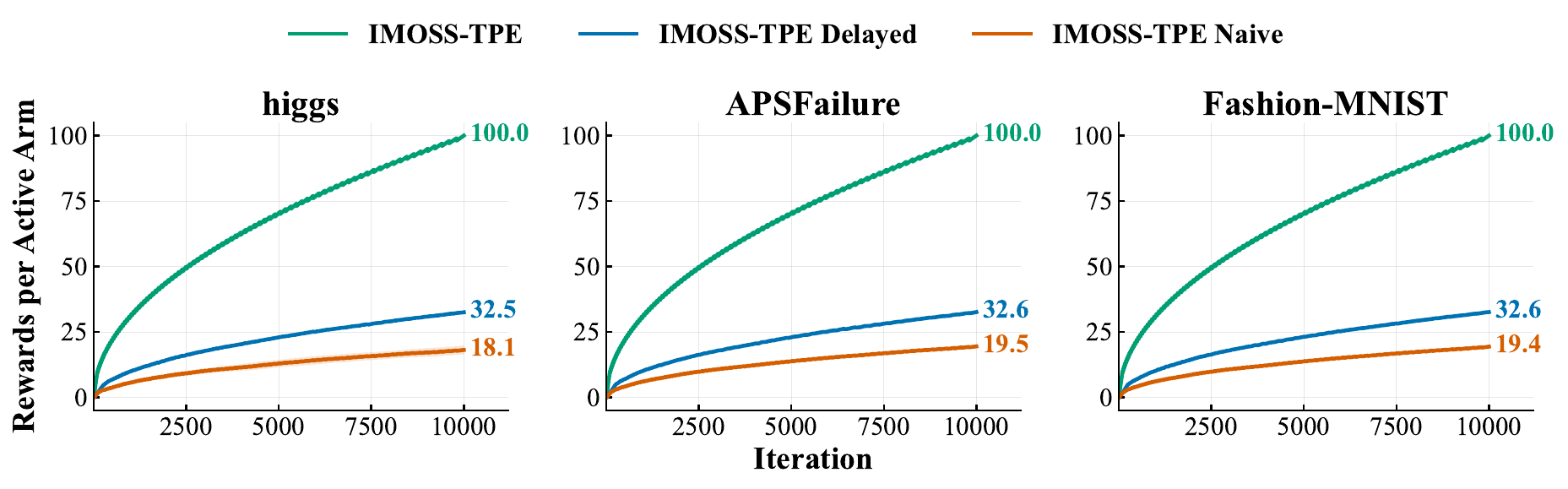}
    \caption{Observed rewards per active arm on the three LCBench instances:
    the cumulative number of rewards delivered so far divided by the number of
    active arms $|\mathcal{M}_t|$, against iteration. Under identical
    delayed and censored feedback, the two delayed methods observe the same total
    number of rewards ($\approx\!1{,}900$), so this ratio measures how
    concentrated that feedback is across the active set. The delay-aware rule
    (\emph{IMOSS-TPE Delayed}, blue) throttles arm admission by the effective
    count $\tilde{N}_t$ and holds $\approx\!60$ active arms, giving $\approx\!33$
    rewards per arm; the delay-blind \emph{IMOSS-TPE Naive} (orange) expands to
    the full $\approx\!100$ arms and reaches only $\approx\!18$. The no-delay
    skyline (\emph{IMOSS-TPE}, green) observes every pull, giving $100$ rewards
    per arm. Curves are means over $10$ seeds.}
    \label{fig:active-set}
\end{figure*}

\paragraph{Restart on an evolving search space.} A search that runs for a long time will usually see its space of configurations grow, whether because a provider ships a new model or because an existing parameter is allowed a new value. We would like IMABO to give those late additions as much attention as it gave the first configurations it ever saw. It does not do so by default, because it explores the most at the beginning of a run. The condition $|\mathcal{M}_t| < t^{\beta}$ is easy to meet when $t$ is small and hard to meet once $t$ is large, so newly added configurations, arriving when $t$ is already large, would be picked up only rarely. We correct this by restarting the schedule at the round $t_0$ at which the space changed. With $N_0 = |\mathcal{M}_{t_0}|$ arms already in place, a new arm enters when
\begin{equation*}
    |\mathcal{M}_t| - N_0 < (t - t_0)^{\beta}.
\end{equation*}
Only the schedule is reset, and all past rewards are preserved, so when the oracle proposes configurations from the larger space it still uses everything learned so far, unlike a true restart, which would discard that knowledge.

\section{Conclusion}


We addressed OHPO by framing it as an infinitely many-armed bandit over a mixed and possibly conditional search space. We introduced IMABO, a framework that combines a bandit policy for allocating pulls with an oracle for suggesting new configurations, and IMOSS, an anytime and restart-free policy with a cumulative quantile-regret bound. When instantiated with a Random, a TPE, a mutate-KL$\times$PE, or a TabPFN oracle, IMABO outperformed its baselines on both classical machine-learning models and a language-model agent while being computationally efficient for online optimization.

Several features set IMABO apart among online tuners. It operates directly on mixed, conditional, tree-structured spaces without discretizing any coordinate, and it comes with a cumulative quantile-regret guarantee. Because its active set grows only as $t^{\beta}$, it requests fewer distinct configurations than rounds, an advantage when adopting a configuration carries a training or warm-up cost. Its heuristic extension to delayed and censored feedback also performs well under production conditions.

A central challenge is the imbalance induced by adaptive sampling: promising configurations receive many observations, whereas most remain weakly estimated. This process can also introduce survivorship bias, since configurations favored by unusually strong early rewards are more likely to remain active and accumulate further evidence, making their true performance difficult to estimate. Developing uncertainty-aware or debiased estimators tailored to this adaptive data collection process is therefore a promising direction.

The modularity of IMABO opens several further directions. When request context is available before selection, the allocation policy could be replaced by a contextual bandit without changing the proposal mechanism. A large language model could also serve as the oracle, reading the best configurations found so far together with a description of the parameters and returning the next one to admit, and the same mechanism would extend the search to prompts themselves \cite{yang2024largelanguagemodelsoptimizers,zhou2023largelanguagemodelshumanlevel}. Constrained or multi-objective settings, such as maximizing quality subject to a cost or latency budget, are a natural extension. More broadly, jointly learning when to expand, what to propose, and which active configuration to serve could further improve the balance between discovery and exploitation.
\newpage
\bibliographystyle{unsrtnat}  
\bibliography{references}  
\clearpage
\appendix
\crefalias{section}{appendix}
\centerline{\large\bfseries\scshape Appendix}
\vspace{1em}
\section{Theoretical Analysis}
\label{sec:theory-analysis}
In this section, we prove the main theoretical result in two stages. First, we
establish \Cref{lem:growing-moss-allocation}, which extends the usual MOSS allocation bound to a growing active set. The regular growth condition
\begin{equation*}
\frac{t}{K_t} \le \lambda \frac{T}{K_T}
\end{equation*}
allows us to sandwich the time-varying MOSS confidence radius between two fixed-$K_T$ radii. We can then use the standard MOSS underestimation/overestimation decomposition and peeling argument to obtain a gap-dependent bound, which is converted into the distribution-free guarantee $O(\sqrt{K_TT})$. Importantly, the lemma holds relative to a reference arm $\vect{x}_g$ that is admitted at a possibly random time $\tau$ and remains active thereafter.

We then apply this lemma to prove \Cref{thm:quantile-regret}. Let
$\tau_\rho$ be the first time a top-$\rho$ configuration enters the active set,
and take the first such configuration as the reference arm $\vect{x}_g$. We
split the $\rho$-regret as
\begin{equation*}
\mathcal{R}_T^{\rho,+}
\le
\mathcal{R}_T^{\mathrm{disc}}
+\mathcal{R}_T^{\mathrm{expand}}
+\mathcal{R}_T^{\mathrm{MOSS}},
\end{equation*}
corresponding to three terms: the cost accumulated before $\tau_\rho$, the cost
of expansion rounds after $\tau_\rho$, and the cost of MOSS exploitation rounds
after $\tau_\rho$. We bound each in turn.
The \emph{discovery} term is controlled by the conditional coverage assumption,
which bounds the tail probability of $\tau_\rho$ and gives $O(p_\rho^{-1/\beta})$.
The \emph{expansion} term counts rounds that admit a new arm: each such round
pulls a single arm, incurring cost at most $1$, and there are at most $K_T$ such rounds, so
this term is $O(K_T)=O(T^\beta)$.
The \emph{exploitation} term is where the lemma enters: $\vect{x}_g$ remains
active and satisfies $\mu_{\vect{x}_g}\geq\mu_\rho$, so the regret relative to
$\mu_\rho$ is bounded by the regret relative to $\vect{x}_g$, and
\Cref{lem:growing-moss-allocation} gives $O(\sqrt{K_TT})=O(T^{(1+\beta)/2})$.
Finally, since $\beta<1$, the expansion term $T^\beta$ is dominated by the
exploitation term $T^{(1+\beta)/2}$, yielding the claimed rate.

Let each configuration $\vect{x} \in \xcal$ have an unknown reward
distribution supported on $[0,1]$ with mean $\mu_{\vect{x}}$. For $\rho \in (0,1)$, let $\mu_\rho$ denote the top-$\rho$
threshold and define
\begin{align*}    
\mathrm{TOP}_\rho := \curlybracket{\vect{x} \in \xcal : \mu_{\vect{x}} \ge \mu_\rho}.
\end{align*}
We consider the $\rho$-regret defined as:
\begin{align*}
\mathcal{R}_T^{\rho,+} := \sum_{t=1}^T \parenthese{\mu_\rho-\mu_{\vect{x}_t}}_+,  \qquad \parenthese{a}_+ := \max\curlybracket{a,0}.
\end{align*}

Let $\mathcal{M}_t$ be the active set after round $t$, and let
$K_t=|\mathcal{M}_t|$. The expansion schedule satisfies $K_t=\Theta(t^\beta)$
with $\beta\in(0,1)$; we use $K_t=O(t^\beta)$ and $K_t=\Omega(t^\beta)$ wherever
the corresponding constant is needed.
At each round, IMOSS either admits a new arm (an \emph{expansion} round) or
serves an arm already in $\mathcal{M}_t$ through the MOSS index (an
\emph{exploitation} round). We write $\mathcal{T}_{\mathrm{expand}}$ for the set
of expansion rounds and $\mathcal{T}_{\mathrm{MOSS}}$ for the set of
exploitation rounds, so that every round $t$ belongs to exactly one of the two.

\begin{lemma}[Allocation regret of growing-active-set MOSS]
\label{lem:growing-moss-allocation}
Let $(\mathcal{F}_t)_{t\ge0}$ be the filtration generated by the rounds up to
and including round $t$, and let $(\mathcal{M}_t)_{t\le T}$ be a possibly random,
nondecreasing sequence of active sets, with $K_t = |\mathcal{M}_t|$ and $K=K_T$.
Assume that the arm admitted or pulled at round $t$ is
$\mathcal{F}_{t-1}$-measurable, and that, conditionally on $\mathcal{F}_{t-1}$,
each new observation from arm $\vect{x}$ is independent, supported on $[0,1]$,
and has mean $\mu_{\vect{x}}$.

Assume that the active-set schedule satisfies, for some constant $\lambda \ge 1$,
\begin{align*}
    \frac{t}{K_t} \le \lambda \frac{T}{K}
    \qquad \forall t\le T
\end{align*}
Assume that, at exploitation rounds, an arm can be selected by the
allocation policy only after it has received at least one observation. At an
exploitation round $t$, the policy selects
\begin{align*}
    \vect{x}_t \in \arg\max_{\vect{x}\in\mathcal{M}_t}
    \curlybracket{
        \widehat \mu_{\vect{x}}(n_{\vect{x}}(t))
        +
        \sqrt{
        \frac{1+\alpha}{2}
        \frac{\max\curlybracket{0,\log\parenthese{t/\parenthese{K_t n_{\vect{x}}(t)}}}}
                {n_{\vect{x}}(t)}
        }
    }
\end{align*}
Let $\tau\leq T$ be an $(\mathcal{F}_t)$-stopping time and let $\vect{x}_g$ be an
$\mathcal{F}_{\tau-1}$-measurable arm, i.e., chosen using only information
available up to round $\tau-1$. Assume that $\vect{x}_g$ is active from
time $\tau$ onward, and define
\begin{align*}
    \Delta_{\vect{x}}^{\vect{x}_g} := (\mu_{\vect{x}_g}-\mu_{\vect{x}})_+.
\end{align*}
Then there exists a constant $C_{\alpha,\lambda}>0$, depending only on $\alpha$
and $\lambda$, such that
\begin{align}
    \E\bracket{\sum_{t=\tau}^T \Delta_{\vect{x}_t}^{\vect{x}_g} \mathbf{1}\curlybracket{t \in \mathcal{T}_{\mathrm{MOSS}}}}
    \le
    C_{\alpha,\lambda}\sqrt{KT}
\end{align}
\end{lemma}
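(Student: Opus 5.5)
We follow the standard MOSS analysis of \citet{audibert2009minimax} in the anytime form of \citet{degenne2016anytime}. The only new ingredients are the growing active set, handled through the growth condition, and the random reference arm $\vect{x}_g$. Write $\varphi_t(n)=\sqrt{\frac{1+\alpha}{2}\frac{\log_+(t/(K_t n))}{n}}$ for the radius at round $t$ and $\bar\varphi(n)=\sqrt{\frac{1+\alpha}{2}\frac{\log_+(\lambda T/(K n))}{n}}$ for a fixed-$K$ comparison radius.

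\textbf{Step 1: sandwich the radius.} The growth condition $t/K_t\le\lambda T/K$ gives $\varphi_t(n)\le\bar\varphi(n)$ for every $t$, so the exploration bonus never exceeds the fixed-horizon MOSS bonus with $K$ arms and horizon $\lambda T$. For the reference arm we need a lower bound, and the current radius $\varphi_t$ only ever helps there, so we keep it. Fix a threshold $\delta=\sqrt{K/T}$. Rounds with $\Delta_{\vect{x}_t}^{\vect{x}_g}\le\delta$ contribute at most $T\delta=\sqrt{KT}$.

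\textbf{Step 2: underestimation/overestimation split.} At an exploitation round $t\ge\tau$ with $\Delta:=\Delta_{\vect{x}_t}^{\vect{x}_g}>\delta$, the index of $\vect{x}_t$ is at least that of $\vect{x}_g$. Hence one of two things happens.
\begin{enumerate}
\item[(i)] $\vect{x}_g$ is underestimated: its index falls below $\mu_{\vect{x}_g}-\Delta/2$.
\item[(ii)] $\vect{x}_t$ is overestimated: its index exceeds $\mu_{\vect{x}_g}-\Delta/2 = \mu_{\vect{x}_t}+\Delta/2$.
\end{enumerate}
For (ii), summing over pulls of a fixed arm and using $\varphi_t\le\bar\varphi$, the usual computation gives an expected number of bad pulls of $O\big(\log_+(T\Delta^2/K)/\Delta^2\big)$ plus a Hoeffding-tail term. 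This is the same as in MOSS, with $\lambda$ absorbed in constants. The arm set is random, so we index arms by admission order $j\le K$. Conditionally on the admission time, each arm's reward sequence is i.i.d.\ because its identity is $\mathcal{F}_{t-1}$-measurable, so we can use a stack-of-rewards construction. Summing $\Delta\cdot O(\log_+(T\Delta^2/K)/\Delta^2)$ over $K$ arms with $\Delta>\delta$ gives $O(\sqrt{KT})$, since $x\mapsto\log_+(Tx^2/K)/x$ is maximized at order $\sqrt{T/K}$.

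\textbf{Step 3: the reference arm (main obstacle).} Since $\vect{x}_g$ is $\mathcal{F}_{\tau-1}$-measurable, the rewards it produces from $\tau$ on are i.i.d.\ with mean $\mu_{\vect{x}_g}$. This follows by optional stopping, or by a stack construction indexed by the arm's pull count. Term (i) is then bounded as in MOSS by $\E\sum_t \Delta\,\mathbf{1}\{\text{(i)}\}\le T\,\E[\delta_g]$-type quantities. Concretely, let $\delta_g=(\mu_{\vect{x}_g}-\min_s \text{index}_g(s))_+$ and use that rounds of type (i) only occur when $\Delta<2\delta_g$, which gives the bound $2T\,\E[\delta_g\mathbf 1\{\delta_g>\delta/2\}]$. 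It remains to show $\Pr(\delta_g\ge u)\lesssim \frac{K}{Tu^2}$ up to $\alpha$-constants. This requires the lower confidence bound with the \emph{time-varying} radius $\varphi_t(n)$, and $\varphi_t(n)$ is small when $t/K_t$ is small. Here I would use the lower half of the sandwich: $t/K_t\ge c\,t^{1-\beta}$ by the schedule. I would then run the peeling over both $n$ and geometric blocks of $t$, using Doob's maximal inequality on each block, exactly as in the anytime analysis of \citet{degenne2016anytime}. The $(1+\alpha)$ factor makes the peeled sum converge, which is why $C$ depends on $\alpha$. Integrating the tail, $T\int_{\delta/2}^\infty \frac{K}{Tu^2}\,du=O(K/\delta)=O(\sqrt{KT})$. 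Combining Steps 1--3 yields $C_{\alpha,\lambda}\sqrt{KT}$. I expect the delicate part to be this anytime lower-deviation bound for an arm admitted at a random time with a shifting $K_t$.
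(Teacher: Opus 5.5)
Your Steps 1--2 coincide with the paper: the upper envelope from the growth condition, the split at $\delta=\sqrt{K/T}$, the midpoint under/over-estimation dichotomy, and the Hoeffding treatment of overestimated arms. The gap is the device in Step 3. You charge all type-(i) rounds to one variable $\delta_g$, via $2T\,\E[\delta_g\mathbf 1\{\delta_g>\delta/2\}]$ with $\Pr(\delta_g\ge u)\lesssim K/(Tu^2)$. That is the fixed-horizon MOSS argument, and it works there only because the optimal arm's index depends on $s$ alone. Here the index of $\vect{x}_g$ at round $t$ is $\widehat\mu_{\vect{x}_g,s}+b_t(s)$. For $\delta_g$ to dominate every type-(i) round, it must be a minimum over all rounds visited, including those shortly after a small $\tau$ (e.g.\ just after the warm-up), where $t/(K_t s)=O(1)$ and the bonus is $O(1)$ or zero. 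At such a round $\delta_g\ge\mu_{\vect{x}_g}-\widehat\mu_{\vect{x}_g,1}-O(1)$. With Bernoulli rewards this exceeds a fixed $u$ with probability bounded below independently of $T$. So the claimed tail is false, your bound is linear in $T$, and no peeling over blocks of $t$ can repair it. The underestimation level at round $t$ only concentrates at scale $\sqrt{K/t}$, not $\sqrt{K/T}$. Relatedly, $t/K_t\ge c\,t^{1-\beta}$ is not a hypothesis of the lemma (only $t/K_t\le\lambda T/K$ is), and you do not need it: monotonicity gives $K_t\le K$, hence $b_t(s)\ge b_t^-(s)$, the fixed-$K$ anytime MOSS radius.

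The paper avoids a global variable by indexing underestimation events by the number $r$ of large-gap pulls so far: $A_r(k)=\{\exists s\ge1:\widehat\mu_{\vect{x}_g,s}+b^-_{r+s}(s)<z_k\}$. This is valid because calendar time is at least $r+s$ and $b^-_t$ is nondecreasing in $t$. The increments $\Delta_k-\Delta_{k-1}$ of the sorted gaps are charged to these events. The first $\widetilde r_k=\lceil K/(2\Delta_k^2)\rceil$ values of $r$ are paid trivially, and $\sum_{r>\widetilde r_k}\Pr(A_r(k))$ is bounded by Lemma 4 of \citet{degenne2016anytime}. If you prefer to keep your structure, make the level round-dependent. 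Set $Z_t:=(\mu_{\vect{x}_g}-\min_{1\le s\le t}[\widehat\mu_{\vect{x}_g,s}+b_t^-(s)])_+$ on the reward stack of $\vect{x}_g$; each type-(i) charge at round $t$ is then at most $2Z_t$. For fixed $t$ this is a fixed-horizon MOSS quantity, so peeling in $s$ alone gives $\Pr(Z_t\ge u)\le\min\{1,C_\alpha(K/(tu^2))^{(1+\alpha)/2}\}$. Hence $\E[Z_t]\le C'_\alpha\sqrt{K/t}$, which is finite because $\alpha>0$, and $\sum_{t\le T}\E[Z_t]\le 2C'_\alpha\sqrt{KT}$. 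The $\min\{1,\cdot\}$ truncation at early rounds is exactly what a single $T$-scaled variable throws away.
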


\begin{proof}
The argument below is conditional on $\mathcal{F}_{\tau-1}$, which fixes the
comparator $\mu_{\vect{x}_g}$ and hence the gaps $\Delta_{\vect{x}}$ and their ordering.
Conditionally on $\mathcal{F}_{\tau-1}$ the per-arm reward streams from time
$\tau$ onward remain independent with means $\mu_{\vect{x}}$ by assumption, so the
concentration bounds below apply. Since the resulting bound is uniform in
$\mathcal{F}_{\tau-1}$, taking expectations at the end gives the stated
unconditional bound.

Let $K:=K_T$. Since the reference arm $\vect{x}_g$ is fixed throughout the proof,
we drop the superscript and abbreviate the gap of arm $\vect{x}$ relative to
$\vect{x}_g$ as
\begin{align*}
    \Delta_{\vect{x}} := \Delta_{\vect{x}}^{\vect{x}_g} = (\mu_{\vect{x}_g}-\mu_{\vect{x}})_+.
\end{align*}
For each arm $\vect{x}$ satisfying $\mu_{\vect{x}}\ge \mu_{\vect{x}_g}$, we have
$\Delta_{\vect{x}}=0$ and it does not contribute to the regret. Order the arms in
$\mathcal{M}_T$ by increasing gap,
\begin{align*}
    0=\Delta_1\le \Delta_2\le\cdots\le \Delta_K,
\end{align*}
where the reference arm $\vect{x}_g$ is one of the zero-gap arms. We identify
each arm with its rank $k$ in this ordering, so that $\Delta_k$, $\mu_k$,
$z_k$, and $N_k(T)$ all refer to the $k$-th arm. Throughout the proof, we write
\begin{align*}
    \overline{\log}(u):=\max\curlybracket{1,\log u}
\end{align*}
and define the regret under study as
\begin{align*}
    \mathcal{R}_{\vect{x}_g}(T)
    :=\sum_{t=\tau}^{T} \Delta_{\vect{x}_t}
    \mathbf{1}\curlybracket{t\in\mathcal{T}_{\mathrm{MOSS}}}.
\end{align*}
    
\paragraph{Step 1: confidence-radius sandwich.}

For $s\ge 1$, define the MOSS bonus
\begin{align*}
b_t(s) := \sqrt{\frac{1+\alpha}{2} \frac{\max\curlybracket{0,\log\parenthese{t/\parenthese{K_t s}}} }{s}}
\end{align*}
so that the MOSS index of~\eqref{eq:moss} evaluated at arm $\vect{x}$ (the index $\mathcal{B}_t(\vect{x})$ of Algorithm~\ref{alg:moss}) reads
\begin{align*}
B_t(\vect{x}) := \widehat\mu_{\vect{x}}(n_{\vect{x}}(t)) + b_t(n_{\vect{x}}(t)),
\end{align*}
where $n_{\vect{x}}(t)$ is the number of times $\vect{x}$ has been pulled before round $t$. We write
$\widehat\mu_{\vect{x},s}$ for the empirical mean over the first $s$ pulls of $\vect{x}$, so
that $\widehat\mu_{\vect{x}}(n_{\vect{x}}(t))=\widehat\mu_{\vect{x},n_{\vect{x}}(t)}$.
Define two fixed-$K$ envelope bonuses:
\begin{align*}
b_t^{-}(s) := \sqrt{\frac{1+\alpha}{2} \frac{\max\curlybracket{0,\log\parenthese{t/\parenthese{K s}}} }{s}}
\end{align*}
and
\begin{align*}
b_T^{+}(s) := \sqrt{\frac{1+\alpha}{2} \frac{\max\curlybracket{0,\log\parenthese{\lambda T/K s}} }{s}}
\end{align*}
Since $K_t\le K_T=K$, we have $t/(K_t s)\ge t/(Ks)$, which implies
$b_t(s)\ge b_t^{-}(s)$. Moreover, the regular growth condition gives
$t/(K_t s)\le \lambda T/(Ks)$, and therefore $b_t(s)\le b_T^{+}(s)$. Thus, for
all $t\le T$ and all $s\ge1$,
\begin{align*}
    b_t^-(s)\le b_t(s)\le b_T^+(s).
\end{align*}
        
\paragraph{Step 2: reduction to arms with non-negligible gaps.}
Fix $k_0\in\curlybracket{1,\ldots,K-1}$. We split the regret into pulls of arms with gap at most $\Delta_{k_0}$ and pulls of arms
with larger gap. Since every pull of an arm with gap at most $\Delta_{k_0}$ contributes at most $\Delta_{k_0}$, we decompose the regret as:
\begin{align*}
    \mathcal{R}_{\vect{x}_g}(T)
    = \sum_{t=\tau}^T \Delta_{\vect{x}_t}
      \mathbf{1}\curlybracket{t\in\mathcal{T}_{\mathrm{MOSS}}}
      \mathbf{1} \curlybracket{\Delta_{\vect{x}_t} \leq \Delta_{k_0}}
    + \sum_{t=\tau}^T \Delta_{\vect{x}_t}
      \mathbf{1}\curlybracket{t\in\mathcal{T}_{\mathrm{MOSS}}}
      \mathbf{1} \curlybracket{\Delta_{\vect{x}_t} > \Delta_{k_0}}
\end{align*}
For an arm with gap $\Delta_k \geq \Delta_{k_0}$, we split its per-pull cost into a
baseline part $\Delta_{k_0}$ and an excess part $\Delta_k - \Delta_{k_0}$. The
baseline part is paid at most once per round, so its total over the horizon is
at most $T\Delta_{k_0}$, which yields
\begin{align*}
    \mathcal{R}_{\vect{x}_g}(T) \le T\Delta_{k_0} + \sum_{k>k_0}(\Delta_k-\Delta_{k_0})N_k(T),
\end{align*}
where $N_k(T)$ is the number of exploitation pulls of arm $k$ between $\tau$ and
$T$. It remains to bound the expectation of the second term,
$\E[\sum_{k>k_0}(\Delta_k-\Delta_{k_0})N_k(T)]$.

\paragraph{Step 3: decomposition into underestimation and overestimation events.}
For $k>k_0$, define the intermediate threshold
\begin{align*}
    z_k := \mu_{\vect{x}_g}-\frac{\Delta_k}{2}
    = \frac{\mu_{\vect{x}_g}+\mu_k}{2}
\end{align*}
and let $z_{k_0}=+\infty$. Thus $z_k$ lies halfway between the mean of the reference arm $\vect{x}_g$ and the mean of arm $k$. At any exploitation round $t\ge\tau$, the reference arm $\vect{x}_g$ is active and has been observed, hence eligible for selection in the argmax. If the policy instead selects a suboptimal arm $k>k_0$, its index must be at least that of $\vect{x}_g$:
\begin{align*}
    B_t(k)\ge B_t(\vect{x}_g).
\end{align*}
Such a pull can be charged to one of two events:
\begin{align*}
    \text{(A) the reference arm $\vect{x}_g$ is underestimated} \quad \text{i.e., } \quad B_t(\vect{x}_g) \leq z_k
\end{align*}
or
\begin{align*}
    \text{(B) the selected arm $k$ is overestimated} \quad \text{i.e., } \quad B_t(k) \geq z_k
\end{align*}
Indeed, if $(A)$ fails, then $B_t(\vect{x}_g)\ge z_k$, and hence $B_t(k)\ge B_t(\vect{x}_g)\ge z_k$, so $(B)$ holds; thus one of the two events always occurs. We now adapt this decomposition to the growing-active-set index. Let the lower-envelope index of the reference arm be defined as
\begin{align*}
    B_t^{-}(\vect{x}_g)
    := \widehat\mu_{\vect{x}_g}(n_{\vect{x}_g}(t))
    + b_t^{-}(n_{\vect{x}_g}(t))
\end{align*}
Since $b_t^{-}\le b_t$, if $(A)$ holds, then 
\begin{align*}
    B_t^{-}(\vect{x}_g) \leq B_t(\vect{x}_g) \leq z_k
\end{align*}
Thus underestimation of $\vect{x}_g$ can be controlled through the lower-envelope event. Similarly, let the upper-envelope index of arm $k$ be defined as
\begin{align*}
    B_t^{+}(k) := \widehat\mu_k(n_k(t)) + b_T^{+}(n_k(t))
\end{align*}
Since $b_t\le b_T^{+}$, if $(B)$ holds, then 
\begin{align*}
    B_t^{+}(k) \geq B_t(k) \geq z_k
\end{align*}
Thus, overestimation of arm $k$ can be controlled through the upper-envelope event. 

To bound the regret, we use the usual peeling over gap thresholds. Suppose arm $i>k_0$ is selected. Its excess regret above the baseline $\Delta_{k_0}$ is
\begin{align*}
    \Delta_i-\Delta_{k_0}
    =
    \sum_{k=k_0+1}^{i}
    (\Delta_k-\Delta_{k-1})
\end{align*}
Hence a pull of arm $i$ is charged across the thresholds $k=k_0+1,\ldots,i$. For a fixed threshold $k\le i$, suppose that $B_t^{-}(\vect{x}_g)<z_k$, i.e., that the lower-envelope index of the reference arm $\vect{x}_g$ is below the threshold $z_k$. We charge the threshold increment $\Delta_k-\Delta_{k-1}$ to an underestimation event of $\vect{x}_g$. Otherwise, if $B_t^{-}(\vect{x}_g)\ge z_k$, then since the algorithm selected arm $i$ we have $B_t(i)\ge B_t(\vect{x}_g)$, and it follows that
\begin{align*}
    B_t(i)\ge B_t(\vect{x}_g)\ge B_t^{-}(\vect{x}_g)\ge z_k
\end{align*}
Since $i\ge k$ and the gaps are in nondecreasing order, we have $\Delta_i\ge \Delta_k$, and the corresponding midpoint thresholds satisfy the reverse inequality:
\begin{align*}
    z_i = \mu_{\vect{x}_g}-\frac{\Delta_i}{2}
    \leq \mu_{\vect{x}_g}-\frac{\Delta_k}{2} = z_k
\end{align*}
Hence
\begin{align*}
    B_t(i)\ge z_k\ge z_i
\end{align*}
Since the upper-envelope index dominates the actual index of arm $i$, i.e., $B_t(i)\le B_t^{+}(i)$, we have
\begin{align*}
    B_t^{+}(i)\ge B_t(i)\ge z_i
\end{align*}
Thus, for every threshold \(k\le i\), the corresponding charge
\(\Delta_k-\Delta_{k-1}\) is accounted for in one of two ways:
either the lower-envelope index of \(\vect{x}_g\) falls below \(z_k\), or the selected
arm \(i\) has upper-envelope index above its own midpoint \(z_i\). Therefore
all charges from selecting arm \(i\) are covered by the underestimation events
of \(\vect{x}_g\) and the overestimation events of selected suboptimal arms.
    
For $r\ge0$ and $k>k_0$, define the underestimation event
\begin{align}
    A_r^{\vect{x}_g}(k)
    := \parenthese{\exists s\ge1:
    \widehat\mu_{\vect{x}_g,s} + b_{r+s}^{-}(s) < z_k}.
\end{align}
Here $r$ counts the pulls of large-gap arms (gap larger than $\Delta_{k_0}$)
that have occurred so far, and $s$ counts the pulls of the reference arm. After
$r$ such pulls and $s$ pulls of $\vect{x}_g$, the calendar time is at least
$r+s$. Since the bonus $b_t^-(s)$ is nondecreasing in $t$, evaluating it at the
lower bound $r+s$ can only shrink it, so the event above contains the true
underestimation event at the current time; bounding $A_r^{\vect{x}_g}(k)$
therefore suffices.

For $r\ge1$ and $i>k_0$, define the overestimation event
\begin{align}
    C_r^i := \parenthese{\widehat\mu_{i,r} + b_T^{+}(r) \ge z_i}
\end{align}
where $r$ is the number of times arm $i$ has been pulled. By the charging argument, we have
\begin{align*}
\E\bracket{\mathcal{R}_{\vect{x}_g}(T)}
\le T\Delta_{k_0}
+ U + V
\end{align*}
where
\begin{align*}
        U := \sum_{k=k_0+1}^{K} (\Delta_k-\Delta_{k-1}) \sum_{r\ge0} \mathbb P(A_r^{\vect{x}_g}(k))
\end{align*}
and
\begin{align*}
        V := \sum_{i=k_0+1}^{K} \Delta_i \sum_{r\ge1} \mathbb P(C_r^i)
\end{align*}
The term $U$ controls the threshold contributions charged to underestimation of the reference arm $\vect{x}_g$, while $V$ controls the pulls charged to overestimation of suboptimal arms. In $V$ we charge the full gap $\Delta_i$ to each overestimation pull rather than the per-threshold increment; this only inflates the bound, since the increments over $k_0<k\le i$ sum to at most $\Delta_i$, and it keeps the overestimation term simpler.

\begin{claim}[Lemma 4, \cite{degenne2016anytime}]
\label{claim:moss-anytime-peeling-bounds}
Let $\widetilde r_k := \lceil K/(2\Delta_k^2)\rceil$. There exists a constant
$c_\alpha>0$, depending only on $\alpha$, such that, for every $k>k_0$,
\begin{align}
\sum_{r>\widetilde r_k}\mathbb P(A_r^{\vect{x}_g}(k))
\leq c_\alpha \frac{K}{\Delta_k^2} \bracket{\overline{\log}\parenthese{\frac{2eT\Delta_k^2}{K}}+1}
\end{align}
\end{claim}

In our setting, $b_t^{-}$ coincides with the fixed-$K$ anytime MOSS confidence
radius with $K=K_T$, so the claim applies to the lower-envelope underestimation
events $A_r^{\vect{x}_g}(k)$ verbatim.

\paragraph{Step 4: peeling of the two probability sums.}
Recall $\widetilde r_k := \lceil K/(2\Delta_k^2)\rceil$ from
Claim~\ref{claim:moss-anytime-peeling-bounds}, and for every $k>k_0$ define
\begin{align*}
    \widetilde r'_k := \left\lceil\frac{c'_\alpha \overline{\log}\parenthese{\frac{2\lambda T\Delta_k^2}{K}}}{\Delta_k^2}\right\rceil
\end{align*}
where $c'_\alpha>0$ is a sufficiently large constant depending only on $\alpha$.
We bound $U+V$ by $A+B+C+D$, where
\begin{align*}
    A &:= \sum_{k=k_0+1}^{K} (\Delta_k-\Delta_{k-1})\,\widetilde r_k,
    &\quad
    B &:= \sum_{k=k_0+1}^{K} (\Delta_k-\Delta_{k-1})
          \sum_{r>\widetilde r_k}\mathbb P(A_r^{\vect{x}_g}(k)),
    \\[2pt]
    C &:= \sum_{k=k_0+1}^{K} \Delta_k\,\widetilde r'_k,
    &\quad
    D &:= \sum_{k=k_0+1}^{K} \Delta_k
          \sum_{r>\widetilde r'_k}\mathbb P(C_r^k).
\end{align*}
We now bound these four terms.

\paragraph{Bound on $A$.}
Since $\widetilde r_k \le K/(2\Delta_k^2)+1$, we have
\begin{align*}
    A
    \le
    \sum_{k=k_0+1}^{K}
    (\Delta_k-\Delta_{k-1})
    \parenthese{\frac{K}{2\Delta_k^2}+1}.
\end{align*}
By monotonicity of the gaps and a sum-integral comparison,
$\sum_{k>k_0}(\Delta_k-\Delta_{k-1})/\Delta_k^2 \le 2/\Delta_{k_0+1}$, while
$\sum_{k>k_0}(\Delta_k-\Delta_{k-1}) \le \Delta_K \le 1$ because rewards are
supported on $[0,1]$. Therefore $A \le K/\Delta_{k_0+1}+1$.

\paragraph{Bound on $B$.}
The term $B$ is the underestimation term for the reference arm $\vect{x}_g$.
Since $b_t^-$ is exactly the fixed-$K$ MOSS confidence radius with $K=K_T$,
Claim~\ref{claim:moss-anytime-peeling-bounds} bounds
$\sum_{r>\widetilde r_k}\mathbb P(A_r^{\vect{x}_g}(k))$ by
$c_\alpha (K/\Delta_k^2)[\overline{\log}(2eT\Delta_k^2/K)+1]$, and since
$\lambda\ge1$ we may replace $2eT\Delta_k^2/K$ by $2e\lambda T\Delta_k^2/K$
inside the logarithm. Hence
\begin{align*}
    B
    \leq
    c_\alpha K
    \sum_{k=k_0+1}^{K}
    (\Delta_k-\Delta_{k-1})
    \frac{
    \overline{\log}\parenthese{
    \frac{2e\lambda T\Delta_k^2}{K}
    }+1
    }{\Delta_k^2}.
\end{align*}
A sum-integral comparison bounds the sum by
$c\,[\overline{\log}(2e\lambda T\Delta_{k_0+1}^2/K)+1]/\Delta_{k_0+1}$ for a
universal constant $c>0$, whence
\begin{align*}
    B
    \leq
    c_\alpha
    \frac{K}{\Delta_{k_0+1}}
    \bracket{
        \overline{\log}\parenthese{
            \frac{2e\lambda T\Delta_{k_0+1}^2}{K}
        }+1
    }.
\end{align*}
    
\paragraph{Bound on $C$.}
By definition of $\widetilde r'_k$, and splitting off the $+1$,
\begin{align*}
C
\le
c'_\alpha
\sum_{k=k_0+1}^{K}
\frac{
\overline{\log}\parenthese{
\frac{2\lambda T\Delta_k^2}{K}
}
}{\Delta_k}
+
\sum_{k=k_0+1}^{K}\Delta_k.
\end{align*}
Since each $\Delta_k\le 1$ and the sum has at most $K$ terms, the second sum is at most $K$. For
the first, write $\delta:=\Delta_{k_0+1}$ and $r=\Delta_k/\delta\ge1$; by the inequality
$\sup_{r\ge1}\overline{\log}(Ar^2)/r \le c\,(\overline{\log}(A)+1)$ (valid for a
universal constant $c$ and all $A>0$), applied with $A=2\lambda T\delta^2/K$, each summand
is at most $(c_\lambda/\delta)[\overline{\log}(2\lambda T\delta^2/K)+1]$ for a
constant $c_\lambda$ depending only on $\lambda$. Therefore,
\begin{align*}
C
\le
c_{\alpha,\lambda}
\frac{K}{\Delta_{k_0+1}}
\bracket{
\overline{\log}\parenthese{
\frac{2\lambda T\Delta_{k_0+1}^2}{K}
}+1
} + K.
\end{align*}

\paragraph{Bound on $D$.}
The term $D$ is the overestimation tail for suboptimal arms. By the definition
of $\widetilde r'_k$, for all $r>\widetilde r'_k$ we have $b_T^+(r)\le\Delta_k/4$,
so on the event $C_r^k$ the inequality
$\widehat\mu_{k,r}+b_T^+(r)\ge z_k = \mu_k+\Delta_k/2$ forces
$\widehat\mu_{k,r}-\mu_k \ge \Delta_k/4$. Since rewards are supported on $[0,1]$,
Hoeffding's inequality gives $\mathbb P(C_r^k)\le \exp(-r\Delta_k^2/8)$.
Consequently,
\begin{align*}
\sum_{r>\widetilde r'_k}\mathbb P(C_r^k)
\le
\sum_{r\ge 1}\exp\parenthese{-\frac{r\Delta_k^2}{8}}
\le
\frac{c}{\Delta_k^2}
\end{align*}
for a universal constant $c>0$, and therefore
$D \le c\sum_{k>k_0}1/\Delta_k \le cK/\Delta_{k_0+1}$.
Combining the bounds on $A,B,C,D$, we obtain
\begin{align*}
\E\bracket{\mathcal{R}_{\vect{x}_g}(T)}
\le
T\Delta_{k_0}
+
C_{\alpha,\lambda}
\frac{K}{\Delta_{k_0+1}}
\bracket{
\overline{\log}\parenthese{
\frac{2e\lambda T\Delta_{k_0+1}^2}{K}
}+1
}
+
K
\end{align*}
where $C_{\alpha,\lambda}$ depends on $\alpha$ and $\lambda$.

\paragraph{Step 5: distribution-free conversion.}
We now convert the gap-dependent bound into a distribution-free bound, with the
gaps ordered as $0=\Delta_1\le\cdots\le\Delta_K$. If $K>T$, the trivial bound
$\mathcal R_{\vect{x}_g}(T)\le T \le \sqrt{KT}$ already suffices, so we assume
$K\le T$ and set $\varepsilon := \sqrt{K/T}\le1$. We distinguish two cases. 

\emph{Case 1: all gaps are small.}
If $\Delta_k\le\varepsilon$ for all $k$, then every exploitation round has regret
at most $\varepsilon$, so $\mathcal R_{\vect{x}_g}(T)\le T\varepsilon=\sqrt{KT}$.

\emph{Case 2: at least one gap is larger than \(\varepsilon\).}
Since $(\Delta_k)$ is nondecreasing with $\Delta_1=0\le\varepsilon$ and some gap
exceeds $\varepsilon$, there is a last index with gap at most $\varepsilon$; set
$k_0 := \max\{k:\Delta_k\le \varepsilon\}$, so that $1\le k_0<K$ and
$\Delta_{k_0}\le \varepsilon < \Delta_{k_0+1}$. The baseline term then satisfies
$T\Delta_{k_0}\le T\varepsilon=\sqrt{KT}$. Writing $r:=\Delta_{k_0+1}/\varepsilon>1$,
so that $\Delta_{k_0+1}=r\sqrt{K/T}$, we get
$K/\Delta_{k_0+1}=\sqrt{KT}/r$ and $2e\lambda T\Delta_{k_0+1}^2/K=2e\lambda r^2$,
hence
\begin{align*}
\frac{K}{\Delta_{k_0+1}}
\bracket{
\overline{\log}\parenthese{
\frac{2e\lambda T\Delta_{k_0+1}^2}{K}
}+1
}
=
\sqrt{KT}\,
\frac{
\overline{\log}(2e\lambda r^2)+1
}{r}
\le
H_\lambda\sqrt{KT},
\end{align*}
where $H_\lambda:=\sup_{r\ge1}[\overline{\log}(2e\lambda r^2)+1]/r$ is finite (the
numerator grows logarithmically in $r$, the denominator linearly). Plugging this
$k_0$ into the gap-dependent bound gives
$\E[\mathcal{R}_{\vect{x}_g}(T)]\le C_{\alpha,\lambda}\sqrt{KT}+K$, and since
$K\le T$ implies $K\le\sqrt{KT}$, we obtain
$\E[\mathcal{R}_{\vect{x}_g}(T)]\le C'_{\alpha,\lambda}\sqrt{KT}$. Recalling $K=K_T$,
we obtain
\begin{align*}
\E
\bracket{
\sum_{t=\tau}^{T}\Delta^{\vect{x}_g}_{\vect{x}_t}
\mathbf{1}\curlybracket{t \in \mathcal{T}_{\mathrm{MOSS}}}
}
\le
C'_{\alpha,\lambda}\sqrt{K_TT}.
\end{align*}
\end{proof}
%
%
%
%
\begin{theorem}[Restatement of \Cref{thm:quantile-regret}]
Let $\rho \in (0,1)$. We assume that rewards are supported on $[0,1]$ and that each proposed configuration lies in $\mathrm{TOP}_\rho$ with conditional probability at least $p_\rho > 0$:
\begin{equation*}
\forall t \ge 1,~\mathbb P_{P_{\mathcal{O}}}(\vect{x}_t \in \mathrm{TOP}_\rho \mid \mathcal{F}_{t-1}) \ge p_\rho.
\end{equation*}
Then the expected cumulative $\rho$-regret of IMOSS satisfies
\begin{align*}
    \mathbb E\bracket{\mathcal R_T^{\rho,+}}
    =
    O\!\parenthese{
        p_\rho^{-1/\beta}
        +
        T^{(1+\beta)/2}
    }.
\end{align*}
\end{theorem}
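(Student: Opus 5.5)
We follow the three-way decomposition announced at the start of this appendix. Let $\tau_\rho$ be the first round at which an arm of $\mathrm{TOP}_\rho$ is admitted, and let $\vect{x}_g$ be that arm. Since an arm admitted at round $t$ is drawn before its reward is observed, $\vect{x}_g$ is $\mathcal{F}_{\tau_\rho-1}$-measurable and $\tau_\rho$ is a stopping time. The arm is never removed, so it stays active afterwards. Using $(a)_+\le 1$ on $[0,1]$, we write
\begin{equation*}
\mathcal R_T^{\rho,+}\le \min(\tau_\rho-1,T)+\sum_{t\ge\tau_\rho}(\mu_\rho-\mu_{\vect{x}_t})_+\mathbf 1\{t\in\mathcal T_{\mathrm{expand}}\}+\sum_{t\ge\tau_\rho}(\mu_\rho-\mu_{\vect{x}_t})_+\mathbf 1\{t\in\mathcal T_{\mathrm{MOSS}}\}.
\end{equation*}
The expansion term is at most $K_T=O(T^\beta)$, since each arm is admitted once. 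For the exploitation term we use $\mu_{\vect{x}_g}\ge\mu_\rho$, which gives $(\mu_\rho-\mu_{\vect{x}})_+\le(\mu_{\vect{x}_g}-\mu_{\vect{x}})_+=\Delta^{\vect{x}_g}_{\vect{x}}$. \Cref{lem:growing-moss-allocation}, applied with $\tau=\tau_\rho\wedge T$, then bounds its expectation by $C_{\alpha,\lambda}\sqrt{K_TT}=O(T^{(1+\beta)/2})$. On the event $\tau_\rho>T$ this term is empty.

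Two hypotheses of the lemma need checking. First, the growth condition $t/K_t\le\lambda T/K_T$. Since $K_t=\max(N_s,\lceil t^\beta\rceil)$ up to rounding, we have $t/K_t\approx t^{1-\beta}$, which is increasing in $t$. So the ratio at $t$ is at most a constant multiple of the ratio at $T$, with $\lambda$ depending only on $\beta$ and $N_s$. Second, the lemma requires that $\vect{x}_g$ has been observed before any exploitation round in which it competes. This holds because admission pulls the new arm immediately. The warm-up arms drawn from $P_0$ are only finitely many, and they can be absorbed into the constant. Alternatively, we note that they also satisfy coverage with probability $\rho$, which only helps.

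The main obstacle is the discovery term $\E[\min(\tau_\rho,T)]$. The expansion rounds occur at deterministic times, and the $j$-th admitted arm enters at time $t_j\approx j^{1/\beta}$. By the conditional coverage assumption and the tower property, the probability that none of the first $j$ admitted arms lies in $\mathrm{TOP}_\rho$ is at most $(1-p_\rho)^j$. Hence
\begin{equation*}
\E[\tau_\rho]\le\sum_{j\ge0}t_{j+1}\,\mathbb P(\text{first $j$ proposals miss})\lesssim\sum_{j\ge0}(j+1)^{1/\beta}(1-p_\rho)^j=O\big(p_\rho^{-1-1/\beta}\big).
\end{equation*}
This naive bound is off by a factor $p_\rho^{-1}$ from the claimed $p_\rho^{-1/\beta}$. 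To remove it, we will sum tail probabilities in time instead: $\mathbb P(\tau_\rho>t)\le(1-p_\rho)^{K_t-N_s}\le\exp(-p_\rho c\,t^\beta)$. Then
\begin{equation*}
\E[\tau_\rho]\le\sum_t\exp(-c\,p_\rho t^\beta)\approx\int_0^\infty e^{-c p_\rho u^\beta}\,du=\Gamma(1+1/\beta)\,(c\,p_\rho)^{-1/\beta},
\end{equation*}
which is $O(p_\rho^{-1/\beta})$ as desired. The care needed is in justifying the product bound $(1-p_\rho)^{\#\text{expansions}}$ by iterated conditioning on $\mathcal F_{t_j-1}$ at each expansion time, and in handling integer rounding and the warm-up offset in the constant $c$.

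Summing the three terms gives $O(p_\rho^{-1/\beta}+T^\beta+T^{(1+\beta)/2})$. Since $\beta<1$, the $T^\beta$ term is dominated by $T^{(1+\beta)/2}$, which yields the claimed rate.
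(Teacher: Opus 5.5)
Your proposal is correct and follows the paper's proof essentially step for step. It uses the same discovery/expansion/exploitation decomposition, the same tail-sum-in-time bound $\mathbb P(\tau_\rho>t)\le(1-p_\rho)^{K_t}\le\exp(-c\,p_\rho t^\beta)$ evaluated through the Gamma integral (you were right to discard the looser per-admission bound), and the same appeal to \Cref{lem:growing-moss-allocation} with reference arm $\vect{x}_g$ after checking $t/K_t\le\lambda T/K_T$. Your subtraction of the $N_s$ warm-up arms, which are drawn from $P_0$ rather than from the oracle, is a small refinement: the paper glosses over this by applying the coverage bound to all $K_t$ admitted arms.
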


\begin{proof}
Let $E_t$ be the event that no top-$\rho$ configuration has entered the active set by the end of round $t$:
\begin{align*}
E_t := \curlybracket{\mathcal{M}_t \cap \mathrm{TOP}_\rho = \varnothing}
\end{align*}
and let
\begin{align*}
\tau_\rho := \inf\curlybracket{t : \mathcal{M}_t \cap \mathrm{TOP}_\rho \neq \varnothing}
\end{align*}
be the first round at which a top-$\rho$ configuration is active.
We decompose the clipped $\rho$-regret into three parts:
\begin{align*}
\mathcal{R}_T^{\rho,+}
\le
\mathcal{R}_T^{\mathrm{disc}}
+
\mathcal{R}_T^{\mathrm{expand}}
+
\mathcal{R}_T^{\mathrm{MOSS}}
\end{align*}
where $\mathcal{R}_T^{\mathrm{disc}}$ accounts for rounds before a top-$\rho$ arm has been discovered,
$\mathcal{R}_T^{\mathrm{expand}}$ accounts for expansion pulls after discovery, and $\mathcal{R}_T^{\mathrm{MOSS}}$ accounts for exploitation pulls
once a top-$\rho$ arm is active.

\noindent\textbf{Discovery cost.}
Before discovery every served configuration has mean below $\mu_\rho$, so each
round contributes at most $1$ and
\begin{align*}
\mathcal{R}_T^{\mathrm{disc}}
:=
\sum_{t=1}^{T}
\parenthese{\mu_\rho-\mu_{\vect{x}_t}}_+
\mathbf{1}\curlybracket{t<\tau_\rho}
\le
\sum_{t=1}^{T}\mathbf{1}\curlybracket{t<\tau_\rho}
=
\sum_{t=1}^{T}\mathbf{1}(E_t),
\end{align*}
since $\{t<\tau_\rho\}=\{\mathcal{M}_t\cap\mathrm{TOP}_\rho=\varnothing\}=E_t$.
By the coverage assumption, each of the $K_t$ arms admitted up to round $t$ lies
in $\mathrm{TOP}_\rho$ with conditional probability at least $p_\rho$ given the
history before its admission. Iterating this over the $K_t$ admissions,
\begin{align*}
\mathbb P(E_t) \leq \parenthese{1-p_\rho}^{K_t} \leq \exp(-p_\rho K_t),
\end{align*}
where the last step uses $1-x \leq e^{-x}$ for all $x \geq 0$.
Since $K_t=\Omega(t^\beta)$, there is a constant $c_0>0$ with $K_t\ge c_0 t^\beta$
for all $t\ge1$, hence $\mathbb P(E_t) \leq \exp(-p_\rho c_0 t^\beta)$. Taking
expectations in the display above,
\begin{align*}
    \E \bracket{\mathcal{R}_T^{\mathrm{disc}}}
    \leq \sum_{t=1}^T \mathbb P(E_t)
    \leq \sum_{t=1}^{\infty} \exp(-p_\rho c_0 t^\beta).
\end{align*}
The sum is bounded by an integral:
\begin{align*}
\sum_{t=1}^{\infty} e^{-p_\rho c_0 t^\beta}
\le
1+\int_0^\infty e^{-p_\rho c_0 x^\beta}\,dx
\end{align*}
With the change of variables $u=p_\rho c_0 x^\beta$, we have
\begin{align*}
\int_0^\infty e^{-p_\rho c_0 x^\beta}\,dx
=
\frac{\Gamma(1/\beta)}{\beta}\,(p_\rho c_0)^{-1/\beta}
=
O\!\parenthese{p_\rho^{-1/\beta}}
\end{align*}
Therefore,
\begin{align*}
\E\bracket{\mathcal{R}_T^{\mathrm{disc}}}
\le
1+\frac{\Gamma(1/\beta)}{\beta}\,(p_\rho c_0)^{-1/\beta}
=
O(p_\rho^{-1/\beta})
\end{align*}

\noindent\textbf{Expansion cost after discovery.}
Recall that $\mathcal{T}_{\mathrm{expand}}$ is the set of expansion rounds, and define
\begin{align*}
\mathcal{R}_T^{\mathrm{expand}}
:=
\sum_{t=\tau_\rho}^{T}
\parenthese{\mu_\rho-\mu_{\vect{x}_t}}_+
\mathbf{1}\curlybracket{t\in\mathcal{T}_{\mathrm{expand}}}.
\end{align*}
At every expansion round, the newly admitted configuration is pulled once and
may incur regret at most $1$. The number of expansion rounds by time $T$ is at
most $K_T$, and $K_T=O(T^\beta)$. Therefore,
\begin{align*}
\E\bracket{\mathcal{R}_T^{\mathrm{expand}}}
\leq K_T
= O(T^\beta).
\end{align*}

\noindent\textbf{Exploitation cost after discovery.}
If $\tau_\rho>T$, no top-$\rho$ arm is discovered within the horizon; the sum
defining $\mathcal{R}_T^{\mathrm{MOSS}}$ below is then empty and
$\mathcal{R}_T^{\mathrm{MOSS}}=0$, so we may assume $\tau_\rho\le T$.
Let $\vect{x}_g$ be the first top-$\rho$ arm admitted to the active set, i.e., the arm admitted at round $\tau_\rho$.
Then $\mu_{\vect{x}_g} \ge \mu_\rho$, so for every $t$,
\begin{align*}
\parenthese{\mu_\rho - \mu_{\vect{x}_t}}_+ \leq \parenthese{\mu_{\vect{x}_g} - \mu_{\vect{x}_t}}_+
= \Delta_{\vect{x}_t}^{\vect{x}_g}.
\end{align*}
Summing over exploitation rounds after discovery,
\begin{align*}
\mathcal{R}_T^{\mathrm{MOSS}}
:=
\sum_{t=\tau_\rho}^{T}
\parenthese{\mu_\rho-\mu_{\vect{x}_t}}_+
\mathbf{1}\curlybracket{t \in \mathcal{T}_{\mathrm{MOSS}}}
\le
\sum_{t=\tau_\rho}^{T}
\Delta_{\vect{x}_t}^{\vect{x}_g}
\mathbf{1}\curlybracket{t \in \mathcal{T}_{\mathrm{MOSS}}}.
\end{align*}
Since $\mathcal{M}_t$ and $\mathrm{TOP}_\rho$ are determined by
the admissions and the fixed means, $\tau_\rho$ is an $(\mathcal{F}_t)$-stopping
time, and $\vect{x}_g$ is the arm admitted at round $\tau_\rho$, hence
$\mathcal{F}_{\tau_\rho-1}$-measurable and active for all $t\ge\tau_\rho$.
We may now invoke Lemma~\ref{lem:growing-moss-allocation} with reference arm
$\vect{x}_g$ and stopping time $\tau=\tau_\rho$: the schedule $K_t=\Theta(t^\beta)$
satisfies the regular growth condition. Indeed, using $K_t=\Omega(t^\beta)$ to
upper-bound $t/K_t$ and $K_T=O(T^\beta)$ to lower-bound $T/K_T$,
\begin{align*}
\frac{t}{K_t} = O\parenthese{t^{1-\beta}} = O\parenthese{T^{1-\beta}}
= O\parenthese{\frac{T}{K_T}}
\qquad \forall t\le T,
\end{align*}
where we used that $t^{1-\beta}$ is nondecreasing in $t$ for $\beta<1$. Hence
$t/K_t \le \lambda\, T/K_T$ for a constant $\lambda\ge1$ depending only on the
schedule. Lemma~\ref{lem:growing-moss-allocation} therefore gives
\begin{align*}
\E\bracket{\mathcal{R}_T^{\mathrm{MOSS}}}
\le
\E\bracket{\sum_{t=\tau_\rho}^{T}
\Delta_{\vect{x}_t}^{\vect{x}_g}
\mathbf{1}\curlybracket{t \in \mathcal{T}_{\mathrm{MOSS}}}}
\le
O \parenthese{\sqrt{K_TT}}.
\end{align*}
Using $K_T = O\parenthese{T^\beta}$, we have
\begin{align*}
\E\bracket{\mathcal{R}_T^{\mathrm{MOSS}}}
= O \parenthese{T^{(1+\beta)/2}}
\end{align*}
Therefore, combining the three parts, we have
\begin{align*}
    \E\bracket{\mathcal{R}_T^{\rho,+}}
    \leq O\parenthese{p_\rho^{-1/\beta}}
    + O(T^\beta)
    + O\parenthese{T^{(1+\beta)/2}}
\end{align*}
Since $\beta \in (0,1)$, we have $T^\beta \leq T^{(1+\beta)/2}$. Therefore,
\begin{align*}
    \E\bracket{\mathcal{R}_T^{\rho,+}}
    \leq O\parenthese{p_\rho^{-1/\beta}}
    + O\parenthese{T^{(1+\beta)/2}}
\end{align*}
\end{proof}

\section{Complexity Analysis}
\label{sec:complexity}

We separate the complexity of the IMOSS policy, which treats the oracle as a black box, from the complexity of a single oracle call, which depends on the instantiation. Throughout, $K_t = |\mathcal{M}_t|$ denotes the size of the active set, $d$ the number of parameters of $\xcal$, and $c$ the size of the candidate pool drawn by the oracle.

\paragraph{IMOSS.}
After $T$ rounds the active set holds $K_T = O(T^{\beta})$ arms, each stored as its configuration together with a running mean and two counters, so memory grows as $O(d\,T^{\beta})$, sublinear in the horizon. At each round, deciding whether to expand the active set amounts to comparing $K_t$ with $t^{\beta}$, which takes constant time. An exploitation round runs \Cref{alg:moss}, which evaluates the index~\eqref{eq:moss} of every arm and returns the maximizer; since the statistics entering the index are maintained incrementally, each evaluation is $O(1)$ and the full pass costs $O(K_t) = O(T^{\beta})$. An expansion round replaces this pass with a single oracle call, and since the oracle is queried only when an arm is admitted, at most $K_T = O(T^{\beta})$ calls occur over the whole horizon. Writing $C_{\mathcal{O}}(K)$ for the cost of one oracle call given an active set of $K$ arms, the total running time after $T$ rounds is
\begin{equation*}
    O\parenthese{T^{1+\beta} + T^{\beta}\, C_{\mathcal{O}}(T^{\beta})},
\end{equation*}
where the first term accounts for the index passes and the second for the oracle calls. For IMOSS-Random, one call draws a single configuration in $C_{\mathcal{O}}(K) = O(d)$, so the total is $O(T^{1+\beta})$.

\paragraph{TPE oracle.}
One call to the TPE oracle sorts the active set by the index~\eqref{eq:moss} in $O(K \log K)$, fits the Parzen densities $\ell$ and $g$ with one kernel per arm and per parameter in $O(d\,K)$, samples $c$ candidates from $\ell$ in $O(c\,d)$, and scores each candidate by the ratio $\ell(\vect{x})/g(\vect{x})$, each density evaluation summing over $O(K)$ kernels per parameter, for $O(c\,d\,K)$ in total. Hence
\begin{equation*}
    C_{\mathrm{TPE}}(K) = O\parenthese{K \parenthese{\log K + c\,d}},
\end{equation*}
and the oracle contributes $O\parenthese{T^{2\beta}(\log T + c\,d)}$ to the total running time. Since $2\beta < 1+\beta$ for $\beta \in (0,1)$, this is asymptotically dominated by the index passes, and IMOSS-TPE runs in $O(T^{1+\beta})$ total time for fixed $c$ and $d$.

\paragraph{Mutate-KL$\times$PE oracle.}
One call selects the best arm so far $\hat{\vect{x}}$ and refreshes the coordinate bandit in $O(K)$, evaluates the $d$ KL-UCB indices in $O(d)$ with a constant number of bisection steps each, and sorts the active set by the index~\eqref{eq:moss} in $O(K \log K)$. It then fits and scores the Parzen pair for the \emph{selected coordinate only}: $O(K)$ kernels to fit, $c$ candidate draws, and $O(K)$ kernel evaluations per candidate, for $O(c\,K)$. Hence
\begin{equation*}
    C_{\mathrm{KL\times PE}}(K) = O\parenthese{K \parenthese{\log K + c} + d},
\end{equation*}
a factor $d$ cheaper in the density work than the TPE oracle, which fits and evaluates one kernel per parameter. The oracle contributes $O\parenthese{T^{2\beta}(\log T + c)}$ to the total running time and, like the TPE oracle, is asymptotically dominated by the index passes.

\paragraph{TabPFN oracle.}
One call to the TabPFN oracle assembles a context table of the $K$ arms and their mean rewards in $O(d\,K)$, then runs an ensemble of $E$ forward passes ($E = 4$ in our implementation) of the pretrained transformer over the table extended with the $c$ candidate rows. With self-attention across the rows of the table, one forward pass costs $O\parenthese{(K+c)^2}$, treating the width and depth of the network as constants, so
\begin{equation*}
    C_{\mathrm{TabPFN}}(K) = O\parenthese{E\,(K+c)^2}.
\end{equation*}
Once $K$ exceeds the pool size $c$, the oracle contributes $O(E\,T^{3\beta})$ to the total running time. For $\beta < 1/2$ this remains dominated by the $O(T^{1+\beta})$ index passes; at $\beta = 1/2$, the value used in our experiments, both terms are $O(T^{3/2})$, up to model-dependent constants; for larger $\beta$ the oracle calls dominate. In practice the wall-clock cost of a call is governed by the constant factors of the pretrained network rather than by the cost's asymptotic growth in $K$, since each call is a single batched forward pass.


\section{Oracle ablation: which oracle, and when?}
\label{sec:oracle-ablation}

We have already introduced three learned admission oracles. A natural question to ask is \emph{which oracle should be used on which problem}. We compare them,
together with the uniform baseline, under the same active-set schedule and
MOSS allocation rule, with $\beta=0.5$ and $T=5000$, over $10$ seeds, so that
every difference is attributable to the proposal mechanism alone.

We arrange the four oracles so that two of the comparisons are controlled,
each isolating a single design choice. IMOSS-TPE (\Cref{subsec:tpe}) and
IMOSS-mutate-KL$\times$PE (\Cref{subsec:klxtpe}) rank candidates with the
same machinery, a Parzen density ratio $\ell/g$ fitted on the same
MOSS-scored good/bad split, and differ only in \emph{where} the proposal is
drawn: globally, from a density fitted on all good arms, or locally, by
changing one coordinate of the best arm so far. Comparing the two therefore
isolates the effect of \emph{locality}. IMOSS-mutate-KL$\times$PE and
IMOSS-TabPFN (\Cref{subsec:tabfm}) are both local, each scoring a pool of
single-coordinate mutations of the same best arm so far $\hat{\vect{x}}$, and differ only in what ranks
those candidates: a univariate density ratio for the former, a pretrained
reward surrogate for the latter. Comparing the two therefore isolates the
effect of the \emph{surrogate}. IMOSS-Random, which neither localizes nor
learns, is the floor against which they are compared.

We run all four on the three discrete random-forest grids of
\Cref{sec:openml-hpo} and on three LCBench instances defined in \Cref{sec:delayed-feedback-experiment}.
Together the two families give six reward landscapes that differ widely in
structure.

\subsection{Landscape structure and surrogate accuracy}
\label{sec:oracle-ablation-diagnostics}

Two properties govern how much an oracle can help on a given task: how the
reward is spread over the search space, and, for the model-based oracle, how
well that reward can be predicted from the arms tried so far. We measure both
on the random-forest grids before turning to the comparison itself.

\paragraph{What makes the three grids differ.}
The three random-forest grids are the clearest case, since their reward
landscapes can be inspected directly.

\begin{figure}[htbp]
    \centering
    \includegraphics[width=\textwidth]{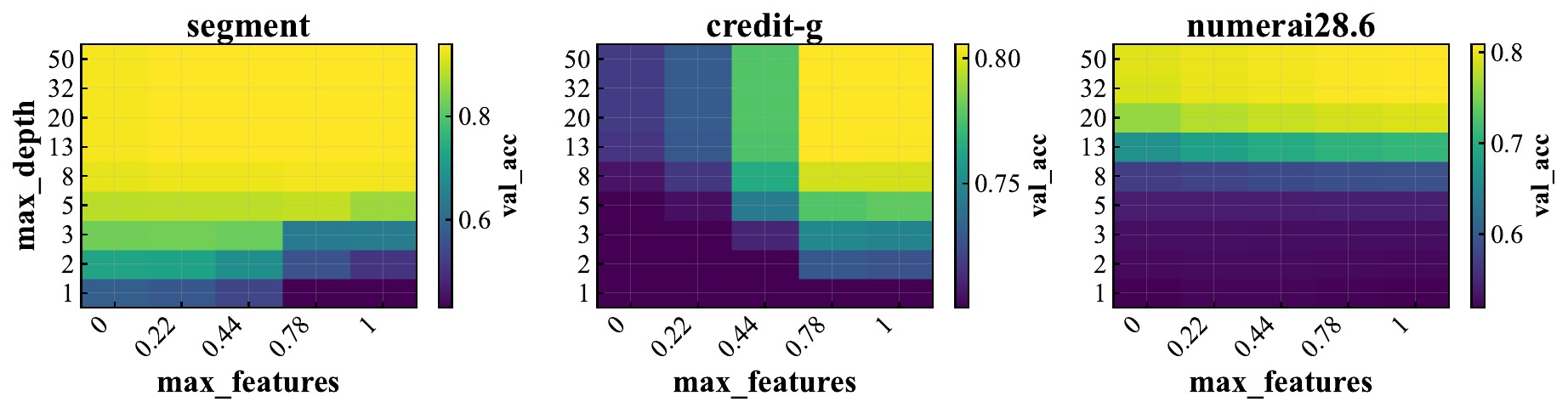}
    \caption{Mean validation accuracy of the three RF tabular benchmarks as a function of \texttt{max\_depth} and \texttt{max\_features}, averaged over \texttt{min\_samples\_leaf} and \texttt{min\_samples\_split}.}
    \label{fig:openml-hpo-landscape-structure}
\end{figure}

\begin{wrapfigure}[18]{r}{0.42\textwidth}
    \centering
    \includegraphics[width=\linewidth]{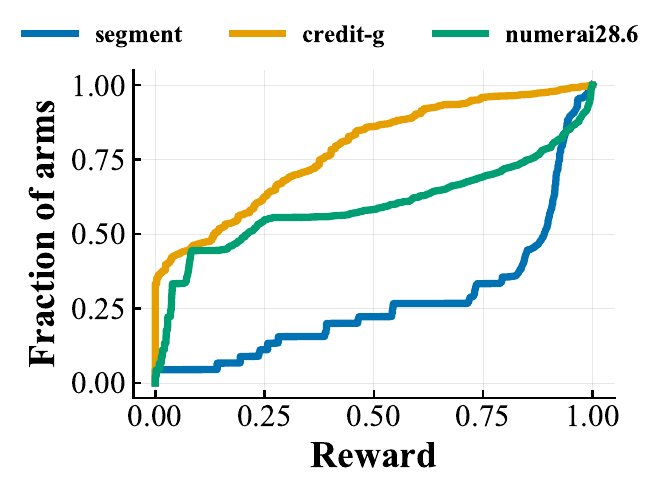}
    \caption{Empirical CDF of arm reward, normalized within each task.}
    \label{fig:openml-hpo-landscape-reward-cdf}
\end{wrapfigure}
\Cref{fig:openml-hpo-landscape-structure} shows that \texttt{segment} and \texttt{numerai28.6} behave alike: reward climbs steadily with \texttt{max\_depth} and barely responds to \texttt{max\_features}, so a single hyperparameter essentially decides how good an arm is. \texttt{credit-g} is the exception: its high-reward arms sit in a vertical band at large \texttt{max\_features} that only emerges once \texttt{max\_depth} is also large, so finding a good arm there means getting two hyperparameters right at once rather than one. In short, on \texttt{segment} and \texttt{numerai28.6} the quality of an arm is driven almost entirely by \texttt{max\_depth}, while on \texttt{credit-g} it depends on \texttt{max\_depth} and \texttt{max\_features} together, with neither one enough on its own.

\Cref{fig:openml-hpo-landscape-reward-cdf} makes the resulting difference in optimum sparsity explicit. The three empirical CDFs of arm reward differ markedly in shape. For \texttt{credit-g} (orange), the CDF rises steeply from $x=0$: arms are concentrated at low reward and only a thin slice near $x=1$ is competitive, indicating a sparse optimum. \texttt{segment} (blue) exhibits the opposite behavior, with the CDF remaining flat across most of the range and rising only near $x=1$; the majority of arms therefore attain near-optimal reward, indicating a broad optimum. \texttt{numerai28.6} (green) lies between these two regimes: its CDF rises faster than \texttt{segment}'s at low reward but far more slowly than \texttt{credit-g}'s. This intermediate profile is consistent with the heatmap: the optimum of \texttt{numerai28.6} is as easy to locate as that of \texttt{segment}, since both are governed primarily by \texttt{max\_depth}, but reward is less uniformly high within the good region, making the arms there harder to rank.

Together, \Cref{fig:openml-hpo-landscape-structure,fig:openml-hpo-landscape-reward-cdf} give the landscape taxonomy we use below to interpret the oracle comparison: \texttt{segment} is a near-one-dimensional, broad-optimum landscape, whereas \texttt{credit-g} has a sparse, interaction-dependent optimum. The location of \texttt{numerai28.6}'s optimum is determined by a single hyperparameter, but rewards are less concentrated within that region.

\paragraph{How well the surrogate models them.}
The surrogate axis of the ablation rests on TabPFN ranking candidates better
than a univariate density ratio does, which it can only do if its predicted
rewards are accurate on these landscapes. We therefore probe its predictions
directly, on the same three grids.

\begin{figure*}[!ht]
    \centering
    \includegraphics[width=\textwidth]{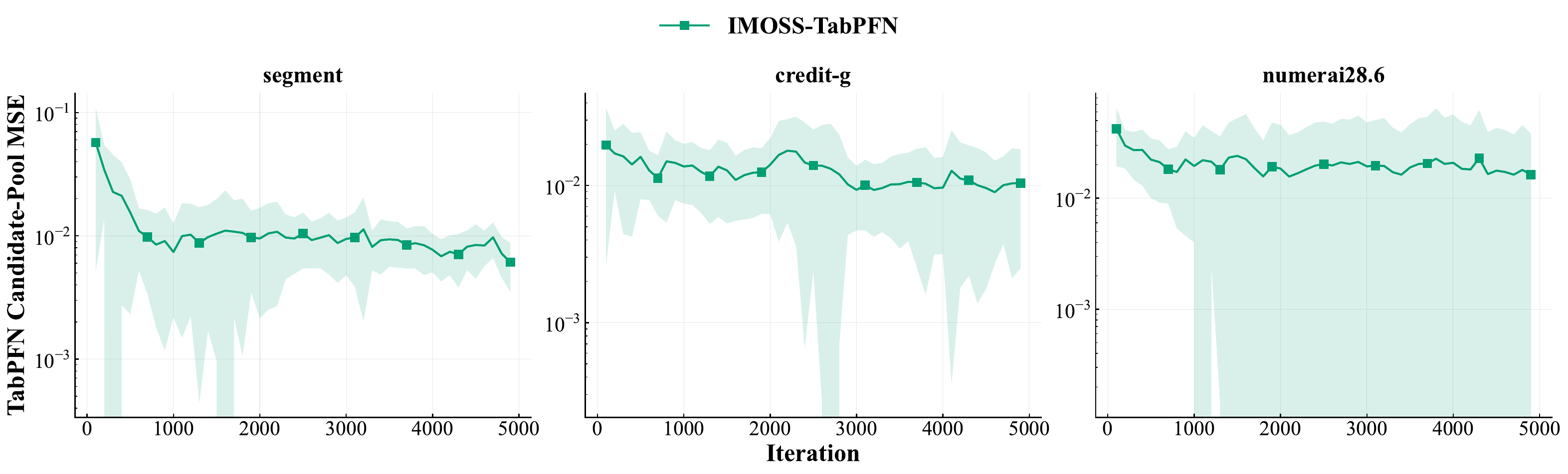}
    \caption{Mean squared error of the TabPFN oracle's reward predictions over its whole candidate pool on the three RF tabular benchmarks (log scale). At every probed exploration step, the oracle scores a pool of $100$ candidate configurations; each point reports the MSE between the predicted and true mean rewards of \emph{all} candidates in that pool, not only the proposed configuration. Curves show means over $30$ paired runs, and shaded bands show one standard deviation across runs.}
    \label{fig:tabpfn-candidate-mse}
\end{figure*}

\Cref{fig:tabpfn-candidate-mse} reports how accurately the TabPFN oracle predicts rewards across its entire candidate pool as the run progresses. On \texttt{segment} and \texttt{credit-g}, the pool-wide MSE drops by roughly an order of magnitude over the first $2000$ iterations as evaluated arms accumulate in the in-context table, showing that the oracle's reward model improves over the whole candidate space and not merely on the configurations it proposes. On \texttt{numerai28.6}, the error stays flat at a low level: most arms have nearly indistinguishable mean rewards, so the pool-wide error is dominated by the residual spread among near-tied arms rather than by ranking improvements, consistent with the landscape analysis above.

\begin{figure*}[!ht]
    \centering
    \includegraphics[width=\textwidth]{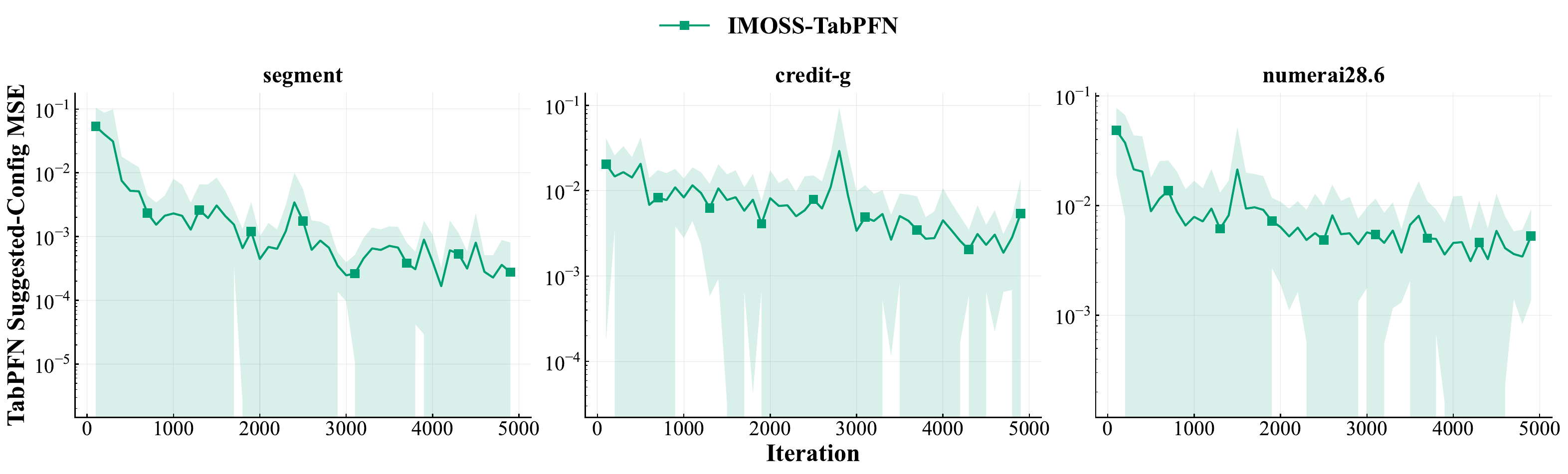}
    \caption{Mean squared error of the TabPFN oracle's reward prediction on the configuration it proposes, on the three RF tabular benchmarks (log scale). At every probed exploration step, each point reports the squared error between the predicted mean reward of the proposed configuration and its true mean reward. Curves show means over $30$ paired runs, and shaded bands show one standard deviation across runs.}
    \label{fig:tabpfn-suggestion-mse}
\end{figure*}

\Cref{fig:tabpfn-suggestion-mse} complements the pool-wide view by measuring accuracy exactly where the oracle acts, on the configuration it proposes. This error falls by one to two orders of magnitude on all three tasks and ends well below the pool-wide MSE, since proposals concentrate in high-reward regions that are densely represented in the in-context table. Notably, it keeps decreasing on \texttt{numerai28.6} even though the pool-wide error stays flat there: the oracle becomes increasingly accurate on the arms it exploits, while the pool-wide error remains dominated by the residual spread of the near-tied bulk.

\subsection{Comparing the oracles}
\label{sec:oracle-ablation-comparison}

\paragraph{A decomposition of oracle quality.}
Let $\mathcal{M}_T$ be the set of arms admitted up to round $T$,
$n_{\vect{x}}$ the number of pulls arm $\vect{x}$ receives, so that
$\sum_{\vect{x}\in\mathcal{M}_T} n_{\vect{x}} = T$.
Recall that $\mathcal{R}_T = \sum_{\vect{x}\in\mathcal{M}_T} n_{\vect{x}}(p^\star - \mu_{\vect{x}})$,
and that $p^\star - \mu_{\vect{x}} = (p^\star - \mu_{\text{best}}) + (\mu_{\text{best}} - \mu_{\vect{x}})$ where $\mu_{\text{best}} = \max_{\vect{x}\in\mathcal{M}_T}\mu_{\vect{x}}$ is the best true mean the oracle admitted.
The online average regret is decomposed as
\begin{equation}
  \label{eq:oracle-decomposition}
  \frac{\mathcal{R}_T}{T} \;=\;
  \underbrace{\bigl(p^\star - \mu_{\text{best}}\bigr)}_{\text{best-arm gap}}
  \;+\;
  \underbrace{\frac{1}{T}\sum_{\vect{x} \in \mathcal{M}_T} n_{\vect{x}}
  \bigl(\mu_{\text{best}} - \mu_{\vect{x}}\bigr)}_{\text{cost of the other arms}}.
\end{equation}
The first term depends only on the oracle: it asks whether the set contains a
good configuration at all, and no allocation rule can reduce it. The second
asks how much is lost on everything else in the set, and is what the policy
pays for every pull it spends away from the best arm it has. An oracle can
therefore lower regret in two ways: by admitting a better best arm, or by
making the rest of the set cheaper to pull. The two controlled comparisons
above separate them.

\begin{figure*}[!ht]
    \centering
    \includegraphics[width=\textwidth]{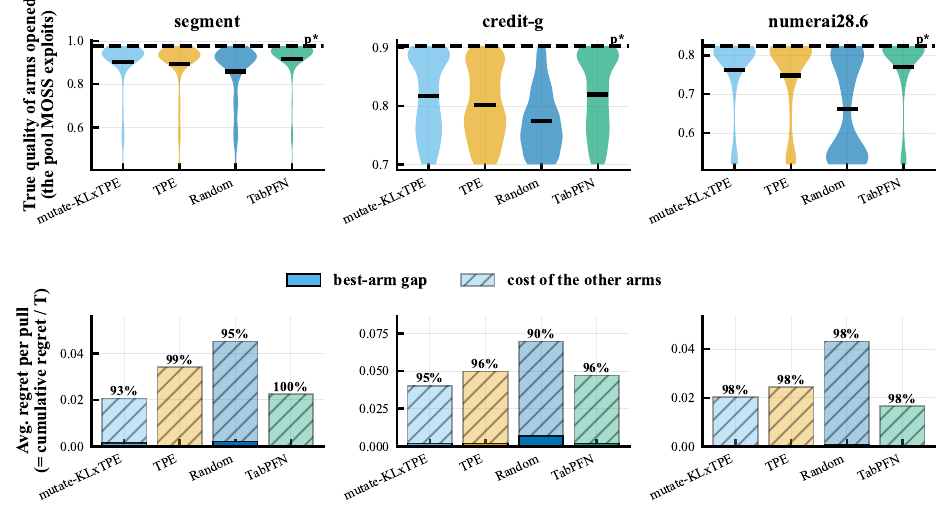}
    \caption{Oracle ablation on the three discrete random-forest grids
    (\Cref{sec:openml-hpo}). \emph{Top:} distribution of the true mean
    rewards of the arms each oracle admits, pooled over $10$ seeds
    (horizontal bar: mean; dashed line: $p^\star$); this is the pool the
    allocation rule must exploit. \emph{Bottom:} average per-pull regret
    split as in \Cref{eq:oracle-decomposition} into the best-arm gap (solid)
    and the cost of the other arms (hatched); the annotation gives the share
    of the second. On the grids every oracle finds a configuration close to
    the optimum, so the best-arm gap is negligible and nearly all the regret
    is what the policy spends on the rest of the set.}
    \label{fig:oracle-decomp-rf}
\end{figure*}

\begin{figure*}[!ht]
    \centering
    \includegraphics[width=\textwidth]{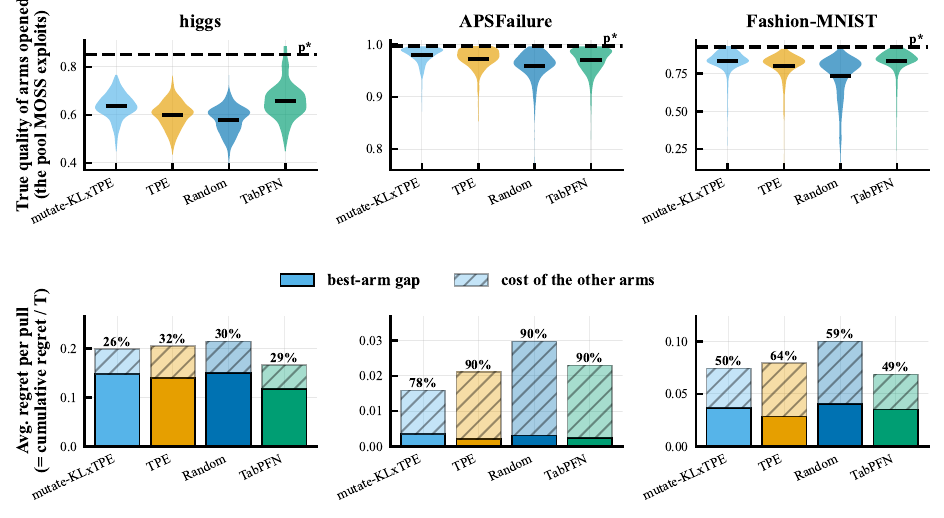}
    \caption{Oracle ablation on the three mixed continuous--discrete LCBench
    instances (\Cref{tab:lcbench-search-space}). Unlike on the grids, the split between the
    two terms varies widely across instances. On \texttt{APSFailure}, the cost
    of the other arms again accounts for nearly all the regret, whereas
    \texttt{higgs} is the one task where the best-arm gap dominates: no oracle
    reaches the optimum, and only the surrogate narrows that gap.}
    \label{fig:oracle-decomp-mixed}
\end{figure*}

\begin{table}[htbp]
    \caption{Cumulative regret $\mathcal{R}_T$ at $T=5000$, mean $\pm$ std
    over $10$ seeds; lower is better. Bold marks the lowest mean per row,
    which need not be a significant advantage: the difference between the two
    local oracles, KL$\times$PE and TabPFN, is significant only on
    \texttt{APSFailure}, in favor of KL$\times$PE. Significance is discussed
    per comparison in the text.}
    \centering
    \small
    \setlength{\tabcolsep}{3.5pt}
    \begin{tabular*}{\linewidth}{l @{\extracolsep{\fill}} llll}
    \toprule
    \textbf{Task} & \textbf{Uniform} & \textbf{TPE} & \textbf{KL$\times$PE} & \textbf{TabPFN} \\
    \midrule
    \multicolumn{5}{l}{\emph{Discrete grids (random forest)}}\\
    \texttt{segment}     & $226 \pm 21$ & $171 \pm 16$ & $\mathbf{103 \pm 37}$ & $113 \pm 26$ \\
    \texttt{credit-g}    & $348 \pm 71$ & $249 \pm 24$ & $\mathbf{201 \pm 56}$ & $235 \pm 112$ \\
    \texttt{numerai28.6} & $216 \pm 31$ & $122 \pm 21$ & $101 \pm 32$ & $\mathbf{83 \pm 20}$ \\
    \addlinespace[0.3em]
    \multicolumn{5}{l}{\emph{Mixed spaces (LCBench)}}\\
    \texttt{higgs}         & $1073 \pm 124$ & $1023 \pm 123$ & $993 \pm 159$ & $\mathbf{829 \pm 306}$ \\
    \texttt{APSFailure}    & $149 \pm 13$ & $105 \pm 20$ & $\mathbf{79 \pm 25}$ & $114 \pm 17$ \\
    \texttt{Fashion-MNIST} & $500 \pm 45$ & $396 \pm 43$ & $371 \pm 153$ & $\mathbf{343 \pm 113}$ \\
    \bottomrule
    \end{tabular*}
    \label{tab:oracle-ablation}
\end{table}

\paragraph{What the comparison shows.}
All three learned oracles accumulate less regret than uniform admission, on
every task. On four of the six tasks almost all the regret is the cost of the
other arms: the set nearly always holds a configuration close to the optimum,
and what is expensive is everything else the policy pulls along the way. On
\texttt{Fashion-MNIST} the two terms are of comparable size, and only on
\texttt{higgs} is it the other way round, with no oracle finding a
configuration near the optimum.

The two local oracles behave similarly: both lower the cost of the other arms below what the global TPE proposal incurs. Neither improves the best-arm gap, and
mutate-KL$\times$PE makes it slightly worse. The two are
statistically indistinguishable except on \texttt{APSFailure}, where
mutate-KL$\times$PE wins.

\paragraph{Why proposing locally helps.}
The cost of the other arms factors into two quantities: how often a pull lands somewhere other than the best arm admitted so far, and how much it costs when it does (\Cref{tab:oracle-mechanism}). The first quantity is close to one on almost every benchmark, so the policy almost never stays on the best arm, and this is more pronounced for the local oracles, whose sets are packed with arms of nearly equal value that MOSS cannot tell apart within the budget. The separation between oracles is therefore carried by the second quantity. 

Every arm a local oracle adds is one parameter away from a good configuration found so far, so the policy wanders among arms that are all nearly as good as the best one, and the cost of wandering is small. A global proposal keeps adding arms from all over the space, where the same wandering is expensive. The local oracle does not make the policy a better chooser, but it makes the choice matter less.

\begin{table}[htbp]
    \caption{Where the cost of the other arms comes from. For each task and
    oracle, the fraction of pulls that do not land on the best arm admitted so
    far, and the mean amount by which such a pull falls short of it. The policy is
    off the best arm for the great majority of pulls under every oracle, and
    most of all under the two local ones, which lose
    the least when it happens.}
    \centering
    \small
    \setlength{\tabcolsep}{3pt}
    \begin{tabular*}{\linewidth}{l @{\extracolsep{\fill}} cccc c cccc}
    \toprule
    & \multicolumn{4}{c}{Pulls not on the best arm}
    & & \multicolumn{4}{c}{Mean loss per such pull} \\
    \cmidrule(lr){2-5}\cmidrule(lr){7-10}
    \textbf{Task} & Unif. & TPE & KL$\times$PE & TabPFN
    & & Unif. & TPE & KL$\times$PE & TabPFN \\
    \midrule
    \multicolumn{10}{l}{\emph{Discrete grids (random forest)}}\\
    \texttt{segment}     & 0.91 & 0.93 & 0.98 & 0.96 & & 0.048 & 0.036 & 0.020 & 0.023 \\
    \texttt{credit-g}    & 0.71 & 0.90 & 0.93 & 0.97 & & 0.097 & 0.057 & 0.042 & 0.046 \\
    \texttt{numerai28.6} & 0.91 & 0.97 & 0.97 & 0.96 & & 0.048 & 0.025 & 0.020 & 0.017 \\
    \addlinespace[0.3em]
    \multicolumn{10}{l}{\emph{Mixed spaces (LCBench)}}\\
    \texttt{higgs}         & 0.79 & 0.84 & 0.99 & 0.99 & & 0.088 & 0.081 & 0.052 & 0.052 \\
    \texttt{APSFailure}    & 0.97 & 0.98 & 0.98 & 0.98 & & 0.027 & 0.019 & 0.013 & 0.021 \\
    \texttt{Fashion-MNIST} & 0.88 & 0.90 & 0.98 & 0.99 & & 0.069 & 0.057 & 0.039 & 0.034 \\
    \bottomrule
    \end{tabular*}
    \label{tab:oracle-mechanism}
\end{table}

The same property also reveals the downside of locality: a one-parameter change cannot take the search far from where it already is. On five of the six tasks, this rarely matters, because good configurations are common and one is always close by. On \texttt{higgs} it does: the good region needs several parameters set correctly at once, so one-at-a-time steps never arrive and the best-arm gap stays large however cheap the other arms become. That is the one task where TabPFN, which ranks candidates with a model of the whole reward surface, admits the best arm of the four oracles.

\paragraph{Practical guidance.}
We recommend mutating the current best configuration, and resorting to a surrogate only when the optimum is expected to be difficult to reach. On most of our tasks the model-free local oracle matches the transformer-based one while fitting a single one-dimensional density per proposal rather than performing a forward pass through a pretrained network. The surrogate justifies its cost only where good configurations are rare and require several parameters to be set correctly at once. The global TPE proposal needs no metric on the space. It models only the density ratio between promising and unpromising configurations rather than the full reward surface, and proposes over a continuum without a pre-declared grid. Mutating the current best arm instead lowers cumulative regret on every task in \Cref{tab:oracle-ablation}, so the global proposal is best read as the metric-free baseline that the mutation oracle improves on.


\section{Additional Experimental Details}
\subsection{HotpotQA: Search Space and Generation Settings}
\label{sec:hotpotqa-details}
The hyperparameter search space, including OpenRouter model identifiers, is
summarized in \cref{tab:hotpotqa-search-space}. Retrieval uses dense embedding
search with \texttt{all-MiniLM-L6-v2} over the full corpus. We use three system prompt templates, shown side by side below.
\texttt{few\_shot} adds examples to the rules; \texttt{zero\_shot}
keeps the rules but drops the examples; and \texttt{naive} keeps only a
one-line instruction.

At inference time, the selected template is sent as the system message. The
user message combines the top-$k$ retrieved passages and the HotpotQA query with the prefixes \texttt{Context:} and \texttt{Question:}, respectively.
Each API call specifies \texttt{model} and \texttt{temperature}; all
other generation parameters (\texttt{max\_tokens}, \texttt{top\_p},
\texttt{seed}, \texttt{stop}) are left unset and default to the provider's
own settings. No provider is pinned on OpenRouter, so requests are served
according to OpenRouter's default routing and fallback policy for the
selected model.
Failed calls are retried up to 20 times with random exponential backoff
(base 2\,s, capped at 300\,s) for rate-limit, provider-overload, and
service-unavailable errors; calls failing with any other error type are
not retried, and the error propagates as a failure. Results for repeated
\texttt{(question, configuration)} pairs are served from a local cache, so
re-running or resuming an experiment does not re-issue identical API
calls.

\begin{table*}[htbp]
    \caption{Hyperparameter search space for the HotpotQA RAG pipeline.
    Generation models are accessed through OpenRouter.}
    \centering
    \begingroup
    \setlength{\tabcolsep}{5pt}
    \renewcommand{\arraystretch}{1.3}
    \begin{tabular}{@{} p{0.18\linewidth} p{0.10\linewidth} p{0.235\linewidth} p{0.365\linewidth} @{}}
    \toprule
    \textbf{Parameter} & \textbf{Type} & \textbf{Domain} & \textbf{OpenRouter ID} \\
    \midrule
    \texttt{top\_k} & Integer & $\{1,\ldots,10\}$ & \\
    \texttt{temperature} & Continuous & $[0, 1]$ & \\
    \texttt{prompt\_template} & Categorical &
    \parbox[t]{\hsize}{\raggedright
        \texttt{few\_shot} \\[0.2em]
        \texttt{zero\_shot} \\[0.2em]
        \texttt{naive}} & \\
    \addlinespace[0.4em]
    \multirow{6}{*}{\parbox[t]{\hsize}{\texttt{model}}} &
    \multirow{6}{*}{\parbox[t]{\hsize}{Categorical}} &
    Qwen 3.5 Flash & \texttt{qwen/qwen3.5-flash-02-23} \\[0.25em]
    & & Ling 2.6 Flash & \texttt{inclusionai/ling-2.6-flash} \\[0.25em]
    & & DeepSeek V4 Flash & \texttt{deepseek/deepseek-v4-flash} \\[0.25em]
    & & Granite 4.1 8B & \texttt{ibm-granite/granite-4.1-8b} \\[0.25em]
    & & HY3 Preview & \texttt{tencent/hy3-preview} \\[0.25em]
    & & Llama 3.2 1B Instruct &
    \parbox[t]{\hsize}{\raggedright\texttt{meta-llama/\allowbreak llama-3.2-\allowbreak 1b-instruct}} \\
    \bottomrule
    \end{tabular}
    \endgroup
    \label{tab:hotpotqa-search-space}
\end{table*}

\medskip
\noindent
\begin{minipage}[t]{0.32\textwidth}
\centering
\textbf{\texttt{few\_shot}}\par\smallskip
\begin{promptbox}[unbreakable]
\begin{PromptVerbatim}[breaklines=true,breaksymbolleft={},breakindent=0pt,breakautoindent=false,fontsize=\scriptsize]
You are a precise question answering assistant. Answer using only the provided context:

Rules:
- Answer in as few words as possible.
- Never explain or add context - just the answer.
- For yes/no questions answer with only "yes" or "no".
- If the context does not contain the answer, output "unknown".

Examples:
Q: What is the capital of France? 
Context: The capital of France is Paris.
A: Paris

Q: What year was the Eiffel Tower built?
Context: The Eiffel Tower was built in 1889.
A: 1889

Q: Is Berlin the capital of Germany? 
Context: The capital of Germany is Berlin.
A: yes
\end{PromptVerbatim}
\end{promptbox}
\end{minipage}\hfill
\begin{minipage}[t]{0.32\textwidth}
\centering
\textbf{\texttt{zero\_shot}}\par\smallskip
\begin{promptbox}[unbreakable]
\begin{PromptVerbatim}[breaklines=true,breaksymbolleft={},breakindent=0pt,breakautoindent=false,fontsize=\scriptsize]
You are a precise question answering assistant. Answer using only the provided context:

Rules:
- Answer in as few words as possible.
- Never explain or add context - just the answer.
- For yes/no questions answer with only "yes" or "no".
- If the context does not contain the answer, output "unknown".
\end{PromptVerbatim}
\end{promptbox}
\end{minipage}\hfill
\begin{minipage}[t]{0.32\textwidth}
\centering
\textbf{\texttt{naive}}\par\smallskip
\begin{promptbox}[unbreakable]
\begin{PromptVerbatim}[breaklines=true,breaksymbolleft={},breakindent=0pt,breakautoindent=false,fontsize=\scriptsize]
Answer the question using the provided context.
\end{PromptVerbatim}
\end{promptbox}
\end{minipage}

\end{document}